\documentclass[preprint,12pt,numbers,sort&compress]{elsarticle}

\usepackage{amsmath,amssymb,amsfonts,mathtools}
\usepackage{amsthm}
\usepackage{bm}
\newtheorem{proposition}{Proposition}

\usepackage{graphicx}
\usepackage{booktabs}
\usepackage{multirow}
\usepackage{array}
\usepackage{algorithm}
\usepackage{subcaption}
\usepackage{algpseudocode}
\usepackage{float}
\usepackage{pgfplots}
\pgfplotsset{compat=1.18}
\usepgfplotslibrary{groupplots}

\usepackage[hidelinks]{hyperref}
\usepackage[nameinlink,noabbrev]{cleveref}
\usepackage{url}

\journal{Journal of Computational Physics}

\begin{document}

\begin{frontmatter}

\title{Cellular Sheaf Neural Operators for Structure-Preserving Surrogate Modeling of Constrained PDEs}

\author[gt,fitmath]{Lennon J. Shikhman\corref{cor1}}
\ead{lshikhman3@gatech.edu}

\author[fitmath,fitaero]{Shane Gilbertie}
\ead{sgilbertie2025@my.fit.edu}

\cortext[cor1]{Corresponding author}

\affiliation[gt]{organization={College of Computing, Georgia Institute of Technology},
                city={Atlanta},
                postcode={30332},
                state={GA},
                country={USA}}

\affiliation[fitmath]{organization={Department of Mathematics and Systems Engineering, Florida Institute of Technology},
                city={Melbourne},
                postcode={32901},
                state={FL},
                country={USA}}

\affiliation[fitaero]{organization={Department of Aerospace, Physics and Space Sciences, Florida Institute of Technology},
                city={Melbourne},
                postcode={32901},
                state={FL},
                country={USA}}

\begin{abstract}
Neural operators provide fast surrogate models for PDE simulations, but standard architectures often treat geometry and discretization as secondary to field data. Physical states are usually represented as grid-channel stacks, even when different quantities naturally belong on vertices, edges, faces, cells, boundaries, or interfaces and must satisfy compatibility constraints. We propose \emph{Cellular Sheaf Neural Operators}, a discretization-aware framework for structure-preserving neural PDE surrogates. The method represents PDE states on oriented cell complexes, couples local feature spaces through learned restriction maps, and uses incidence/Hodge-informed message passing to follow computational geometry. Learned update heads pass through coboundary or flux maps, allowing selected constraints to arise from cell-complex structure rather than only from loss penalties. For magnetohydrodynamics, this yields face-based magnetic-flux updates driven by edge electromotive fields and finite-volume-style fluid updates driven by learned face fluxes and cell sources. On turbulent MHD and fusion-equilibrium surrogate tasks, the method improves structure-sensitive diagnostics, including rollout behavior, divergence control, spectral error, and equilibrium-regression accuracy. These results indicate that cellular-sheaf structure is a useful inductive bias for neural PDE surrogates in constrained multiphysics systems.
\end{abstract}

\begin{keyword}
neural operators \sep cellular sheaves \sep PDE surrogate modeling \sep structure-preserving learning \sep magnetohydrodynamics \sep scientific machine learning
\end{keyword}

\end{frontmatter}

\section{Introduction}
\label{sec:introduction}

Neural operators have become a standard approach for learning fast surrogate models of partial differential equation (PDE) solution maps and time-advance operators \citep{kovachki2023neural,li2021fourier,lu2021learning}. By amortizing the cost of expensive simulations, these models can accelerate parameter studies, uncertainty quantification, design optimization, and inverse problems. Most neural PDE surrogates, however, represent the simulation state as a homogeneous tensor of grid or mesh channels, a convenient but inaccurate representation for neural architectures which does not reflect the typed geometric structure of many numerical discretizations. 

In many PDE solvers, physical quantities are not just interchangeable channels: conserved quantities may be stored in cells, fluxes on faces, circulations on edges, and potentials or boundary quantities on vertices. Structure-preserving numerical methods exploit this placement through compatible discretizations, finite-volume updates, incidence matrices, Hodge operators, and identities such as the boundary of a boundary being zero \citep{hyman1997adjoint,hyman1999orthogonal,desbrun2005dec,arnold2006feec,arnold2010feec}. From a machine-learning perspective, this creates an architectural mismatch, in which a neural operator may be trained on fields whose variables have different geometric roles, while the model class treats them as a single untyped array. This mismatch can be hidden in one-step prediction error but is critical in autoregressive rollout where small compatibility violations can accumulate quickly into physically inconsistent states \citep{shikhman2026diagnosing}.

This paper introduces \emph{Cellular Sheaf Neural Operators} (CSNO), a cochain-valued neural operator framework for structure-preserving, compatibility-aware PDE surrogate models. CSNO replaces the homogeneous channel representation with fields attached to the vertices, edges, faces, and cells of an oriented cell complex. Learned cellular-sheaf restriction maps couple local feature spaces across incidences, while incidence/Hodge-informed message passing gives the architecture access to the same algebraic structure used by compatible numerical methods. The central design choice is that selected update heads do not directly predict arbitrary next-state channels. Instead, they learn fluxes, circulations, source terms, or potentials and route them through coboundary or flux maps. This makes the learned operator class itself compatibility-aware: selected constraints are enforced by cell-complex identities such as \(d_{k+1}d_k=0\), rather than only encouraged by loss penalties. Magnetohydrodynamics (MHD) provides a useful testbed because it couples fluid variables with a constrained magnetic field. The magnetic field must satisfy
\begin{equation}
    \nabla \cdot B = 0.
\end{equation}
Numerical magnetic divergence can introduce nonphysical effects and damage stability \citep{brackbill1980effect,toth2000divb}. In CSNO, the MHD update follows the constrained-transport idea at the level of the learned operator parameterization. Magnetic flux is represented as a face cochain, the network predicts an edge-based electromotive field, and the magnetic update passes through the edge-to-face coboundary. The identity \(d_2d_1=0\) then gives a divergence-preserving magnetic update in the native cochain representation. Fluid variables are advanced through a related finite-volume-style update using learned face fluxes and cell source terms.

We evaluate CSNO on turbulent MHD from The Well MHD\_64 \citep{ohana2024well} and on fusion-relevant ideal-MHD equilibrium prediction from ConStellaration \citep{cadena2025constellaration}. The Well MHD\_64 evaluates CSNO as a learned time-stepper on three-dimensional turbulent fields, while ConStellaration evaluates whether grouped sheaf-style representations are useful for structured equilibrium regression. The experiments compare ordinary one-step prediction accuracy with structure-sensitive behavior, including magnetic-divergence diagnostics, rollout stability, spectral error, parameter efficiency, and equilibrium-regression accuracy. This evaluation reflects the main claim of the paper: learned PDE surrogates should be assessed not only as one-step tensor regressors, but also as autoregressive operators whose inductive biases affect long-horizon and compatibility-sensitive behavior.

\subsection{Contributions}
\label{subsec:contributions}

The main contributions are:
\begin{enumerate}
    \item We introduce \emph{Cellular Sheaf Neural Operators}, a cochain-valued neural operator framework that represents PDE states on oriented cell complexes rather than as homogeneous grid-channel tensors.

    \item We adapt cellular-sheaf message passing to PDE time-advance modeling by coupling learned restriction maps with incidence/Hodge-informed communication across cells of different dimensions.

    \item We define compatibility-preserving update heads that factor learned quantities through coboundary or flux maps, yielding a constrained operator class whose selected compatibility residuals are preserved by identities such as \(d_{k+1}d_k=0\).

    \item We instantiate CSNO for MHD with face-based magnetic flux, edge-based electromotive updates, and cell-based fluid variables, and evaluate it on turbulent MHD and fusion-equilibrium surrogate tasks.
\end{enumerate}

\section{Related work}
\label{sec:related_work}

\subsection{Structure-preserving computational methods for constrained PDEs}
\label{subsec:structure_preserving_methods}

Structure preservation is a central principle in numerical methods for constrained PDEs, especially when rollout stability depends on respecting conservation laws, compatibility conditions, or geometric identities. For conservation laws and convection-dominated systems, finite-volume and discontinuous Galerkin methods encode local conservation, numerical fluxes, and stable propagation of discontinuities directly into the discrete update \citep{leveque2002finite,cockburn2001runge,hesthaven2008nodal}. Compatible and mimetic discretizations pursue a related goal for geometric constraints by constructing discrete gradient, curl, and divergence operators that preserve vector-calculus identities and adjoint relationships at the algebraic level \citep{hyman1997adjoint,hyman1999orthogonal,hyman2000james,brezzi2005convergence}.

Discrete exterior calculus and finite element exterior calculus provide a broader language for these ideas by representing fields as cochains or discrete differential forms on vertices, edges, faces, and cells, while preserving de Rham complex structure, coboundary maps, Hodge operators, and homological identities in finite-dimensional spaces \citep{desbrun2005dec,hirani2003dec,arnold2006feec,arnold2010feec}. This perspective is especially natural in computational electromagnetism, where edge and face degrees of freedom reflect the physical roles of circulation, flux, and charge \citep{bossavit1998computational}. The central lesson is that stable computation for constrained PDEs often requires preserving the algebraic structure of the continuous problem, not only reducing loss alone.

Magnetohydrodynamics is a clear example. The magnetic field must satisfy $\nabla \cdot B = 0$, and numerical violations can introduce nonphysical forces and degrade stability. Classical MHD solvers therefore use constrained transport, divergence-free adaptive mesh refinement, projection-type corrections, and hyperbolic or parabolic divergence cleaning to preserve or control magnetic divergence \citep{evans1988constrained,balsara2001amr,toth2000divb,dedner2002hyperbolic}. CSNO brings the structure-preserving tradition to neural PDE surrogate modeling. Rather than predicting all variables as generic grid channels, the proposed method represents fields on cell-complex entities and uses discrete incidence structure to build compatibility into the learned update.

\subsection{Neural operators and surrogate modeling for scientific computing}
\label{subsec:neural_operators_surrogates}

Neural operators learn mappings between function spaces and are widely used as surrogate models for PDE solution operators or time-advance maps \citep{kovachki2023neural,lu2021learning,li2021fourier}. DeepONet realizes operator learning through branch and trunk networks and has been demonstrated on deterministic and stochastic differential-equation operators \citep{lu2021learning}. Fourier Neural Operators parameterize global convolution kernels in Fourier space and have shown strong performance on grid-based PDE benchmarks, including turbulent-flow surrogate modeling and zero-shot super-resolution \citep{li2021fourier}. Graph-based neural operators extend this viewpoint to nonuniform discretizations through message-passing approximations of kernel integration, while mesh-based graph simulators provide a related approach for learned time-stepping on adaptive simulation meshes \citep{li2020neural,pfaff2021learning}. Geometry-aware variants such as Geo-FNO address the limitation that standard FFT-based FNOs are naturally tied to rectangular uniform grids by learning deformations from physical domains to latent regular grids \citep{li2023fourier}.

These models are attractive in scientific computing because a trained surrogate can amortize the cost of repeated simulations for parameter studies, uncertainty quantification, control, and inverse problems. Physics-informed neural operators further incorporate PDE residual information into operator learning, combining data-driven supervision with physics constraints \citep{li2024physics}. Large benchmark suites such as PDEBench and The Well have standardized evaluation across time-dependent PDE datasets and commonly include U-Net, FNO, and related baselines \citep{takamoto2022pdebench,ohana2024well}.

However, many neural surrogates still operate on a homogeneous channel representation of the physical state. This is convenient for grids, but it can obscure the computational placement of physical variables. A scalar conserved quantity, a face flux, an edge circulation, and a boundary value may all appear as channels even though they play different roles in the discrete PDE update. Boundary data create a related difficulty: changing boundary conditions can induce distinct boundary-indexed operator families, so a single boundary-agnostic surrogate may fail under boundary-condition shift \citep{shikhman2026one}. Together, these issues motivate neural surrogate models that operate not only on functions or grid tensors, but also on algebraic structures used by numerical methods, including incidence matrices, cochains, Hodge maps, and compatibility-preserving updates.

\subsection{Sheaf, geometric, and topological deep learning}
\label{subsec:sheaf_geometric_learning}

Geometric deep learning designs neural architectures whose inductive biases reflect the structure of the underlying domain, including grids, graphs, manifolds, and meshes \citep{bronstein2021geometric}. Graph convolutional and message-passing networks provide the standard framework for learning on relational data by aggregating information over local neighborhoods \citep{kipf2017semi,gilmer2017neural}, and graph networks have been successfully used as learned physics simulators on interacting systems and meshes \citep{sanchez2018graph,pfaff2021learning}. However, ordinary graph neural networks usually attach all learned quantities to nodes and model pairwise relations. This is limiting for PDE systems in which variables naturally live on cells of different dimensions.

Topological deep learning extends neural architectures from graphs to richer domains such as simplicial complexes, cellular complexes, hypergraphs, and sheaves \citep{hajij2023topological,papillon2024architectures,papamarkou2024topological}. Simplicial and cellular neural networks use incidence relations between vertices, edges, faces, and cells to define higher-order message passing \citep{ebli2020simplicial,bodnar2021weisfeiler,hajij2020cell_complex,bodnar2021cellular}. Hodge-based methods use cochain structure and Hodge Laplacians to process edge- or higher-order signals, separating gradient-like, curl-like, and harmonic components \citep{roddenberry2019hodgenet,roddenberry2021principled,schaub2021higher_order_signal_processing}. These methods are closely aligned with numerical PDE structure because they operate on the same kind of cellwise and incidence-based information used in compatible discretizations.

Cellular sheaves add another layer of structure by assigning local vector spaces, or stalks, to cells and using restriction maps to express how neighboring local data should agree. The associated sheaf Laplacian measures inconsistency under these restrictions, generalizing graph Laplacians to heterogeneous local feature spaces \citep{hansen2019toward}. Sheaf Neural Networks, Neural Sheaf Diffusion, and connection-Laplacian models use this structure to replace ordinary graph diffusion with diffusion over learned stalks, restriction maps, and aligned local spaces \citep{hansen2020sheaf,bodnar2022neural_sheaf_diffusion,barbero2022sheaf_connection}.

The proposed Cellular Sheaf Neural Operator builds on this viewpoint for PDE surrogate modeling. Instead of using sheaf structure only as a graph-learning mechanism, we use it to organize cochain-valued physical variables on a cell complex. Learned restriction maps couple local feature spaces, while incidence and Hodge structure provide the algebraic backbone for message passing and compatibility-preserving updates. The resulting architecture uses sheaf structure not only for representation learning, but also for organizing learned PDE updates.

\subsection{Computational magnetohydrodynamics and divergence control}
\label{subsec:computational_mhd}

Magnetohydrodynamics models electrically conducting fluids by coupling fluid conservation laws with electromagnetic field evolution. In ideal and compressible MHD, the state typically includes density, momentum or velocity, pressure or total energy, and the magnetic field, whose evolution is governed by the induction equation \citep{goedbloed2004principles,freidberg2014ideal}. This coupling makes MHD a useful test case for structure-preserving surrogate modeling: its variables are strongly coupled, geometrically distinct, and constrained by compatibility conditions. In turbulent MHD, this structure is also reflected in the decomposition of fluctuations into Alfvén, slow magnetosonic, and fast magnetosonic modes, whose energy fractions and anisotropies depend on forcing and plasma parameters \citep{makwana2020properties}.

A central constraint is the solenoidal condition
\begin{equation*}
    \nabla \cdot B = 0,
\end{equation*}
which expresses the absence of magnetic monopoles. Nonzero numerical magnetic divergence can introduce nonphysical forces, corrupt wave propagation, and degrade the computed MHD solution \citep{brackbill1980effect}. Consequently, MHD solvers often treat divergence control as part of the discretization rather than as a secondary diagnostic.

Several numerical strategies preserve or control this constraint. Finite-volume and Godunov-type solvers use conservative updates and approximate Riemann solvers for shocks, discontinuities, and nonlinear wave interactions \citep{powell1999solution,miyoshi2005multi}. Constrained-transport methods evolve magnetic fluxes using staggered magnetic and electromotive quantities so that a discrete divergence constraint is preserved; modern multidimensional MHD codes combine this idea with shock-capturing Godunov updates for stable high-resolution simulation \citep{gardiner2008unsplit,stone2008athena}. Other approaches include divergence cleaning, projection corrections, and compatible discretizations based on discrete gradient, curl, and divergence operators. Recent neural-operator work for MHD similarly suggests that surrogate models benefit from architectures and losses that reflect numerical-flux structure and divergence-related physics \citep{kim2025operatorlearning}. The MHD version of CSNO follows this geometric placement principle. Magnetic flux is represented on faces, electromotive quantities on edges, and the magnetic update is built from the edge-to-face coboundary map. Since the discrete coboundary satisfies \(d_2 d_1 = 0\), updating face magnetic flux through an edge electromotive field preserves discrete magnetic divergence. Thus, MHD provides a demanding test case for cochain placement, incidence structure, and compatibility-preserving learned updates.
\section{Cellular Sheaf Neural Operator}
\label{sec:method}

The Cellular Sheaf Neural Operator (CSNO) is designed for constrained PDE systems whose variables have different geometric and computational roles. Rather than representing the full state as a homogeneous stack of channels, CSNO represents physical quantities as cochain-valued fields on an oriented cell complex. Learned cellular-sheaf restriction maps couple local feature spaces across incidences, while incidence and Hodge structure provide the algebraic backbone for message passing and compatibility-preserving updates.

\subsection{Cell complexes and cochain-valued states}
\label{subsec:cell_complex_cochains}

Let \(K\) be an oriented cell complex with \(k\)-cells \(K_k\), \(k=0,\ldots,d\). In three dimensions,
\begin{equation*}
    K = K_0 \cup K_1 \cup K_2 \cup K_3,
\end{equation*}
where \(K_0\) denotes vertices, \(K_1\) edges, \(K_2\) faces, and \(K_3\) volume cells. The oriented boundary maps are
\begin{equation*}
    \partial_k : C_k(K) \to C_{k-1}(K),
\end{equation*}
and the corresponding coboundary maps are
\begin{equation*}
    d_{k-1} = \partial_k^{\top} : C^{k-1}(K) \to C^k(K).
\end{equation*}
These maps satisfy the exact algebraic identity
\begin{equation}
    d_k d_{k-1} = 0,
    \label{eq:coboundary_identity_general}
\end{equation}
the discrete analogue of vector-calculus identities such as \(\nabla \cdot (\nabla \times A)=0\). This identity is central in discrete exterior calculus, finite element exterior calculus, and compatible discretizations \citep{desbrun2005dec,hirani2003dec,arnold2006feec,arnold2010feec}.

A cochain-valued PDE state assigns features to cells of different dimensions:
\begin{equation}
    u
    =
    \left(
        u^{(0)},u^{(1)},\ldots,u^{(d)}
    \right),
    \qquad
    u^{(k)} \in C^k(K;\mathbb{R}^{c_k}).
    \label{eq:cochain_state}
\end{equation}
Here \(u^{(k)}\) is a feature array on \(k\)-cells, and \(c_k\) is the number of physical or learned channels associated with that cell dimension. This representation allows the model to distinguish quantities by their computational role. Vertex data, edge circulations, face fluxes, cell-centered conserved variables, and boundary or interface quantities need not be collapsed into the same channel space. This cochain-valued viewpoint is consistent with recent work on learning with differential \(k\)-forms and higher-order fields on simplicial or cellular domains \citep{maggs2024simplicial}.

The time-dependent surrogate-learning problem is to approximate a discrete evolution map
\begin{equation*}
    \mathcal{G}_{\Delta t}: u^n \mapsto u^{n+1},
\end{equation*}
or, with \(m\) input states,
\begin{equation*}
    \mathcal{G}_{\Delta t}:
    \left(u^{n-m+1},\ldots,u^n\right)
    \mapsto u^{n+1}.
\end{equation*}
When data are stored as structured tensors, the model first maps them to cochain fields on an associated cell complex, applies the cellular-sheaf update, and then projects the prediction back to the dataset storage format.

\subsection{Cellular sheaves and learned restriction maps}
\label{subsec:cellular_sheaf_restrictions}

A cellular sheaf assigns a vector space, or stalk, to each cell of a cell complex, together with restriction maps that compare data across incident cells \citep{hansen2019toward,hansen2020sheaf}. For a cell \(\sigma \in K\), let
\begin{equation*}
    \mathcal{F}(\sigma) \cong \mathbb{R}^{r_{\dim \sigma}}
\end{equation*}
denote the learned feature space attached to \(\sigma\). If \(\tau \leq \sigma\), meaning \(\tau\) is a face of \(\sigma\), a restriction map
\begin{equation*}
    \rho_{\sigma \to \tau}:
    \mathcal{F}(\sigma) \to \mathcal{F}(\tau)
\end{equation*}
describes how information on \(\sigma\) is compared or communicated to \(\tau\).

In CSNO, these restriction maps are trainable. For an incidence \(\tau \leq \sigma\), we use
\begin{equation*}
    \rho_{\sigma \to \tau}^{\theta}
    =
    R_{\dim\sigma,\dim\tau}^{\theta},
\end{equation*}
with optional dependence on local geometry:
\begin{equation*}
    \rho_{\sigma \to \tau}^{\theta}
    =
    R_{\dim\sigma,\dim\tau}^{\theta}
    \bigl(g_{\sigma},g_{\tau},g_{\sigma\tau}\bigr).
\end{equation*}
The geometric features \(g_{\sigma}\), \(g_{\tau}\), and \(g_{\sigma\tau}\) may include cell volumes, face areas, edge lengths, centroids, normals, orientations, or boundary tags. Shared restriction maps tie parameters across incidences of the same type, while geometry-conditioned maps allow the coupling to adapt to local mesh structure.

The learned restrictions do not impose a fixed PDE law. They define a structured communication pattern over the cell complex. This is the central distinction between CSNO and a standard channel-mixing neural operator. Information is passed through the incidence geometry of the computational domain rather than through a single homogeneous feature tensor.

\subsection{Sheaf coboundary and Hodge message passing}
\label{subsec:sheaf_coboundary_hodge}

The learned restriction maps define sheaf coboundary operators
\begin{equation}
    \delta_{\mathcal{F},k}:
    C^k(K;\mathcal{F}) \to C^{k+1}(K;\mathcal{F}),
    \label{eq:sheaf_coboundary}
\end{equation}
which generalize ordinary coboundary matrices to heterogeneous learned feature spaces. A corresponding sheaf-Hodge Laplacian on \(k\)-cochains is
\begin{equation}
    L_{\mathcal{F},k}
    =
    \delta_{\mathcal{F},k}^{\top}
    \delta_{\mathcal{F},k}
    +
    \delta_{\mathcal{F},k-1}
    \delta_{\mathcal{F},k-1}^{\top}.
    \label{eq:sheaf_hodge_laplacian}
\end{equation}
This operator measures inconsistency under the learned restrictions and generalizes graph Laplacian diffusion to cellular sheaves \citep{hansen2019toward,bodnar2022neural_sheaf_diffusion,barbero2022sheaf_connection}.

A cellular CSNO block updates hidden cochains
\begin{equation*}
    h_k^\ell \in C^k(K;\mathbb{R}^{r_k}),
    \qquad k=0,\ldots,d,
\end{equation*}
using within-dimension updates, upward incidence messages, downward incidence messages, and optional sheaf-Hodge diffusion:
\begin{equation}
    h_k^{\ell+1}
    =
    \sigma
    \left(
        A_k h_k^\ell
        +
        B_k L_{\mathcal{F},k} h_k^\ell
        +
        C_k \delta_{\mathcal{F},k-1} h_{k-1}^\ell
        +
        D_k \delta_{\mathcal{F},k}^{\top} h_{k+1}^\ell
    \right).
    \label{eq:csno_block}
\end{equation}
Here \(A_k,B_k,C_k,D_k\) are learned channel maps, \(\sigma\) is a nonlinear activation, and terms involving nonexistent dimensions are omitted. Incidence message passing is the default update mechanism; sheaf-Hodge Laplacian filtering can be enabled as an additional diffusion-like component. This construction is related to higher-order message-passing architectures on cellular complexes, including equivariant cellular networks, but here the cell-complex message passing is used inside a neural PDE time-advance surrogate rather than only as a representation-learning layer \citep{kovac2024empcn}.

Geometric Hodge weights enter through diagonal mass-like operators \(M_k\), defining weighted inner products
\begin{equation*}
    \langle a,b\rangle_{k,h}
    =
    a^{\top} M_k b.
\end{equation*}
For cubical complexes, these weights are derived from edge lengths, face areas, and cell volumes. For general cell complexes, the same interface supports geometry-dependent diagonal Hodge weights supplied with the mesh.

\subsection{General cochain update heads}
\label{subsec:general_update_heads}

After \(L\) cellular-sheaf blocks, the model has hidden cochains
\begin{equation*}
    h^L =
    \left(
        h_0^L,\ldots,h_d^L
    \right).
\end{equation*}
CSNO does not require every output to be predicted on the same cell dimension. Task-specific output heads produce cochain-valued quantities on the dimensions appropriate to the PDE discretization:
\begin{equation*}
    \widehat{q}^{(k)}
    =
    P_k(h_0^L,\ldots,h_d^L),
    \qquad
    \widehat{q}^{(k)} \in C^k(K;\mathbb{R}^{m_k}).
\end{equation*}
These heads may predict next-state variables directly, residuals, fluxes, sources, potentials, or constraint-carrying auxiliary quantities.

A general residual update can be written as
\begin{equation*}
    \widehat{u}^{(k),n+1}
    =
    u^{(k),n}
    +
    \Delta t\,\mathcal{U}^{(k)}_{\theta}(h^L,K),
\end{equation*}
where \(\mathcal{U}^{(k)}_{\theta}\) is a learned cochain update. For structure-preserving tasks, these heads may instead produce auxiliary cochains that are routed through coboundary or flux maps. The resulting factorized update and its compatibility guarantee are stated in \cref{subsec:structural_guarantees}.

Flux-form updates are another important case. For cell-centered quantities \(U \in C^d(K)\), a finite-volume-style update can be written
\begin{equation}
    \widehat{U}^{n+1}
    =
    U^n
    -
    \Delta t\, d_{d-1}F_\theta
    +
    \Delta t\,S_\theta,
    \label{eq:general_flux_update}
\end{equation}
where \(F_\theta \in C^{d-1}(K)\) is a learned flux cochain and \(S_\theta \in C^d(K)\) is a learned source or residual. This form separates flux-like information from cell-centered state information and aligns the neural update with conservative discretization structure.

\subsection{Structural guarantees}
\label{subsec:structural_guarantees}

The identity \(d_{k+1}d_k=0\) is classical; the architectural choice in CSNO is to make selected learned update heads factor through the coboundary \(d_k\). This turns a standard algebraic property of the cell complex into a constraint on the learned time-stepper. Let \(K\) be an oriented cell complex with coboundary maps
\begin{equation*}
    C^k(K) \xrightarrow{d_k} C^{k+1}(K) \xrightarrow{d_{k+1}} C^{k+2}(K),
    \qquad d_{k+1}d_k=0.
\end{equation*}
For a constrained \((k+1)\)-cochain, CSNO parameterizes the update as
\begin{equation}
    \widehat{u}^{(k+1),n+1}
    =
    u^{(k+1),n}
    +
    d_k a_\theta^{(k)} ,
    \label{eq:factorized_update_general}
\end{equation}
where \(a_\theta^{(k)}\in C^k(K)\) is produced by the cellular-sheaf network from the hidden cochains \(h^L\). Thus, the network does not directly predict an arbitrary residual in \(C^{k+1}(K)\). Instead, its residual is constrained to lie in
\begin{equation*}
    \mathrm{im}(d_k)\subseteq C^{k+1}(K),
\end{equation*}
and the learned update lies in the affine space
\begin{equation*}
    u^{(k+1),n}+\mathrm{im}(d_k).
\end{equation*}
Equivalently, the constrained CSNO update head has the operator form
\begin{equation}
    \mathcal{G}_\theta
    =
    I + d_k\circ \Phi_\theta ,
    \label{eq:csno_factorized_operator}
\end{equation}
where \(\Phi_\theta\) is the learned cellular-sheaf map that produces the generating cochain. This is the structural difference between a CSNO compatibility-preserving head and a direct channel-wise residual predictor.

\begin{proposition}[Compatibility preservation of the CSNO update head]
\label{prop:compatibility_preservation}
Let \(u^{(k+1),n}\in C^{k+1}(K)\) be updated by the CSNO factorized head in \eqref{eq:factorized_update_general}. Then
\begin{equation*}
    d_{k+1}\widehat{u}^{(k+1),n+1}
    =
    d_{k+1}u^{(k+1),n}.
\end{equation*}
Consequently, if \(d_{k+1}u^{(k+1),0}=0\), then \(d_{k+1}u^{(k+1),n}=0\) for every rollout step generated by this update head.
\end{proposition}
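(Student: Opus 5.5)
The argument is a direct application of linearity of the coboundary together with the identity \eqref{eq:coboundary_identity_general}.

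First, apply \(d_{k+1}\) to both sides of the factorized update \eqref{eq:factorized_update_general}. Since \(d_{k+1}\) is a linear map \(C^{k+1}(K)\to C^{k+2}(K)\) (the transpose of the boundary matrix \(\partial_{k+2}\)), we get
\begin{equation*}
    d_{k+1}\widehat{u}^{(k+1),n+1}
    =
    d_{k+1}u^{(k+1),n}
    +
    (d_{k+1}d_k)\,a_\theta^{(k)}.
\end{equation*}
The second term vanishes because \(d_{k+1}d_k=0\), which is \eqref{eq:coboundary_identity_general} with the index shifted. It is essential that this holds for an \emph{arbitrary} cochain \(a_\theta^{(k)}\in C^k(K)\). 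Nothing about the network \(\Phi_\theta\) matters: it may be nonlinear, may depend on \(u^{(k+1),n}\) and the other hidden cochains \(h^L\), and its parameters \(\theta\) are irrelevant. The only requirement is that its output lies in \(C^k(K)\).

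If the cochains carry channels, i.e.\ lie in \(C^k(K;\mathbb{R}^{c})\), the same computation applies channelwise, with \(d_k\) acting as \(d_k\otimes I_c\).

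For the rollout statement, we proceed by induction on \(n\). The rollout sets \(u^{(k+1),n+1}:=\widehat{u}^{(k+1),n+1}\), with each step produced by the same head. The first part gives the invariance \(d_{k+1}u^{(k+1),n+1}=d_{k+1}u^{(k+1),n}\). Chaining this yields \(d_{k+1}u^{(k+1),n}=d_{k+1}u^{(k+1),0}\) for all \(n\), which equals \(0\) under the hypothesis. More generally, any initial compatibility residual is exactly conserved rather than merely bounded.

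No step is a genuine obstacle; the argument is purely algebraic. The only point needing care is to state precisely what "generated by this update head" means. The constrained component must be advanced exclusively through \eqref{eq:factorized_update_general} at every step, with no post-hoc projection back to a storage format that would not commute with \(d_{k+1}\). The argument also uses exact arithmetic, so in floating point the invariance holds only up to roundoff accumulation. I would state that caveat as a remark after the proof rather than inside it.
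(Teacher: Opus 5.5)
Your proposal is correct and follows the paper's proof: apply $d_{k+1}$ to the factorized update, drop the $d_{k+1}d_k a_\theta^{(k)}$ term by the cochain identity, and repeat the argument at each rollout step. Your extra remarks are accurate and consistent with the paper's later discussion of the projection-commutator error. These are the independence from $\Phi_\theta$, the channelwise action, and the caveat that the rollout must feed back the native cochain rather than a re-projected tensor.
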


\begin{proof}
Applying \(d_{k+1}\) to the CSNO update in \eqref{eq:factorized_update_general} gives
\begin{equation*}
    d_{k+1}\widehat{u}^{(k+1),n+1}
    =
    d_{k+1}u^{(k+1),n}
    +
    d_{k+1}d_k a_\theta^{(k)}.
\end{equation*}
The second term vanishes because \(d_{k+1}d_k=0\). Therefore
\begin{equation*}
    d_{k+1}\widehat{u}^{(k+1),n+1}
    =
    d_{k+1}u^{(k+1),n}.
\end{equation*}
Repeating the same argument at each rollout step gives the final statement.
\end{proof}

The guarantee is independent of the particular neural-network weights, optimizer, or training error. It follows from the CSNO output parameterization itself. A direct residual head of the form
\begin{equation*}
    \widehat{u}^{(k+1),n+1}
    =
    u^{(k+1),n}
    +
    r_\theta^{(k+1)}
\end{equation*}
has the same preservation property only if
\begin{equation*}
    r_\theta^{(k+1)}\in \ker(d_{k+1})
\end{equation*}
for every input. A standard U-Net, FNO, or unconstrained channel-wise neural operator does not enforce this condition. CSNO enforces it by constructing the residual as \(d_k a_\theta^{(k)}\), which automatically belongs to \(\ker(d_{k+1})\) by the cochain-complex identity.

For the MHD specialization, the constrained head is applied with \(k=1\). CSNO predicts an edge cochain
\begin{equation*}
    E_\theta^n\in C^1(K;\mathbb{R}^{c_E})
\end{equation*}
and updates the face-based magnetic flux by
\begin{equation}
    \widehat{B}^{n+1}
    =
    B^n-\Delta t\,d_1E_\theta^n.
    \label{eq:csno_mhd_factorized_update}
\end{equation}
The discrete magnetic divergence is \(d_2B\). Applying Proposition~\ref{prop:compatibility_preservation} gives
\begin{equation}
    d_2\widehat{B}^{n+1}
    =
    d_2B^n.
    \label{eq:csno_native_divergence_preservation}
\end{equation}
Thus, the CSNO magnetic update cannot introduce native cochain-divergence drift. This is the learned-surrogate analogue of the constrained-transport principle: the neural network learns the edge-based electromotive quantity, but the cell complex determines how that learned quantity changes face magnetic flux.

This native guarantee is distinct from the projected grid-space divergence diagnostic used for cross-model evaluation. The datasets store fields as grid tensors, so CSNO uses projection maps
\begin{equation*}
    \Pi_{\mathrm{data}\to K}
    \qquad\text{and}\qquad
    \Pi_{K\to \mathrm{data}}.
\end{equation*}
Let \(D_h\) denote the finite-difference divergence diagnostic applied after projecting a predicted cochain field back to the tensor representation. Even when \eqref{eq:csno_native_divergence_preservation} holds exactly, the reported grid diagnostic measures
\begin{equation*}
    D_h\Pi_{K\to \mathrm{data}}(\widehat{B})
\end{equation*}
rather than \(d_2\widehat{B}\). The discrepancy is controlled by the noncommutation between the projection and the diagnostic:
\begin{equation}
\begin{aligned}
    \left\|
    D_h\Pi_{K\to \mathrm{data}}(\widehat{B})
    -
    \Pi_{K\to \mathrm{data}}(d_2\widehat{B})
    \right\|
    &\leq
    \left\|
    D_h\Pi_{K\to \mathrm{data}}
    -
    \Pi_{K\to \mathrm{data}}d_2
    \right\|
    \,
    \|\widehat{B}\|.
\end{aligned}
\label{eq:projection_commutator_bound}
\end{equation}
The operator
\begin{equation*}
    D_h\Pi_{K\to \mathrm{data}}
    -
    \Pi_{K\to \mathrm{data}}d_2
\end{equation*}
is the projection-commutator error. It vanishes only when the projection and grid diagnostic commute with the cochain divergence. In practice, interpolation from cell-centered magnetic channels to face cochains, projection back to stored tensor channels, and finite-difference boundary treatment can make the projected divergence nonzero even when the native cochain divergence is preserved.

The same factorization clarifies the rollout behavior measured in the experiments. A direct predictor can accumulate error both in the physical state and in the compatibility residual \(d_{k+1}u\). For the constrained CSNO head, the selected compatibility residual is invariant under rollout:
\begin{equation}
    d_{k+1}\mathcal{G}_\theta^N(u)
    =
    d_{k+1}u
    \qquad
    \text{for all } N\geq 1.
    \label{eq:csno_rollout_residual_invariance}
\end{equation}
The model can still accumulate ordinary prediction error through an imperfect learned generating cochain \(a_\theta^{(k)}\), but it cannot accumulate error through drift in the preserved compatibility quantity. This explains why CSNO can differ from direct tensor predictors in long-horizon diagnostics even when one-step pointwise error is not minimized.

\subsection{MHD specialization}
\label{subsec:mhd_specialization}

The placement of variables on the cell complex is problem-dependent. For the MHD experiments, we use the placement suggested by compatible discretizations and constrained-transport methods. Fluid variables are associated with volume cells,
\begin{equation*}
    U^n \in C^3(K;\mathbb{R}^{c_U}),
\end{equation*}
magnetic flux is associated with faces,
\begin{equation*}
    B^n \in C^2(K;\mathbb{R}^{c_B}),
\end{equation*}
and electromotive quantities are associated with edges,
\begin{equation*}
    E_\theta^n \in C^1(K;\mathbb{R}^{c_E}).
\end{equation*}
This placement mirrors the geometric roles of volume quantities, face fluxes, and edge circulations in compatible MHD methods \citep{bossavit1998computational,evans1988constrained,gardiner2008unsplit,stone2008athena}.

Rather than directly predicting the next magnetic field, the model predicts an edge-based electromotive field and updates face magnetic flux by
\begin{equation}
    \widehat{B}^{n+1}
    =
    B^n
    -
    \Delta t\, d_1 E_{\theta}^n.
    \label{eq:mhd_magnetic_update}
\end{equation}
The discrete magnetic divergence is represented by
\begin{equation*}
    d_2 B,
    \qquad
    d_2 : C^2(K) \to C^3(K).
\end{equation*}
Applying \(d_2\) to \eqref{eq:mhd_magnetic_update} gives
\begin{equation}
    d_2 \widehat{B}^{n+1}
    =
    d_2 B^n
    -
    \Delta t\, d_2 d_1 E_{\theta}^n
    =
    d_2 B^n,
    \label{eq:mhd_divergence_preservation}
\end{equation}
since \(d_2d_1=0\). Thus, if the initial magnetic flux satisfies \(d_2B^0=0\), the learned magnetic update preserves this discrete divergence up to numerical precision.

For the fluid variables, the model predicts face-based fluxes
\begin{equation*}
    F_{\theta}^n \in C^2(K;\mathbb{R}^{c_U})
\end{equation*}
and optional cell-centered source or residual terms
\begin{equation*}
    S_{\theta}^n \in C^3(K;\mathbb{R}^{c_U}).
\end{equation*}
The cell-centered fluid state is updated by
\begin{equation}
    \widehat{U}^{n+1}
    =
    U^n
    -
    \Delta t\, d_2F_{\theta}^n
    +
    \Delta t\,S_{\theta}^n.
    \label{eq:mhd_fluid_update}
\end{equation}
This form mirrors finite-volume structure, where the change in a cell quantity is determined by fluxes across the cell boundary \citep{leveque2002finite}. Unlike the magnetic update, this flux form does not by itself guarantee exact conservation of every physical invariant, because the learned fluxes and source terms may contain modeling error. Its purpose is to align the neural update with the algebraic structure of conservative discretizations.

\subsection{Projection, training, and implementation}
\label{subsec:projection_training_implementation}

Many scientific datasets are stored as arrays rather than native cochain data. To use them with CSNO, we define projection operators
\begin{equation*}
    \Pi_{\mathrm{data}\to K}
    :
    \mathcal{X}_{\mathrm{data}}
    \to
    \bigoplus_{k=0}^d C^k(K;\mathbb{R}^{c_k})
\end{equation*}
and
\begin{equation*}
    \Pi_{K\to \mathrm{data}}
    :
    \bigoplus_{k=0}^d C^k(K;\mathbb{R}^{c_k})
    \to
    \mathcal{X}_{\mathrm{data}}.
\end{equation*}
For the structured-grid datasets used here,
\begin{equation*}
    \mathcal{X}_{\mathrm{data}}
    =
    \mathbb{R}^{C \times N_x \times N_y \times N_z}.
\end{equation*}
A cubical complex is constructed from the preprocessed grid. Cell-centered variables are assigned to volume cells. Magnetic vector components are assigned to oriented faces according to their normal direction; when the raw data provide cell-centered magnetic components, the projection uses a consistent interpolation to face cochains. After the cellular update, predicted cochains are projected back to the channel layout used by the supervised target tensor.

For time-dependent prediction, the primary supervised loss is computed after projecting the cochain prediction back to the dataset representation. For grid-valued outputs, we use mean-squared error and relative \(L^2\) error:
\begin{equation}
    \mathcal{L}_{\mathrm{pred}}
    =
    \frac{1}{|\Omega|C}
    \sum_{c=1}^C
    \sum_{x\in\Omega}
    \left|
        \widehat{u}_c(x)-u_c(x)
    \right|^2
    +
    \lambda_{\mathrm{rel}}
    \frac{\|\widehat{u}-u\|_2}{\|u\|_2}.
    \label{eq:training_loss}
\end{equation}
Additional compatibility diagnostics or regularizers are defined on native cochains. For a constraint represented by a coboundary residual \(d_k q^{(k)}\), we track
\begin{equation*}
    \mathcal{D}_k(q)
    =
    \|d_k q^{(k)}\|_2.
\end{equation*}
When used as a regularizer, the corresponding term is
\begin{equation*}
    \mathcal{L}_{\mathrm{comp}}
    =
    \|d_k \widehat{q}^{(k)}\|_2^2.
\end{equation*}
The total training objective is
\begin{equation*}
    \mathcal{L}
    =
    \mathcal{L}_{\mathrm{pred}}
    +
    \lambda_{\mathrm{comp}}\mathcal{L}_{\mathrm{comp}}.
\end{equation*}
For update heads that preserve a compatibility relation by construction, the corresponding term is primarily a diagnostic or a safeguard against projection error.

For ConStellaration, which is an equilibrium-regression task rather than a time-dependent field-prediction problem, we use a finite-dimensional sheaf-style variant. Input features are partitioned into groups, lifted to learned fibers, coupled through restriction-style maps, and concatenated before the final regression head. This variant applies the same principle of grouped local feature spaces and learned coupling without using the time-dependent cochain update heads.

Algorithm~\ref{alg:generic_csno_training_step} summarizes one supervised training step. The task-specific map \(\mathcal{U}_\theta\) may be an unconstrained residual predictor or a compatibility-preserving update built from coboundary or flux maps.

\begin{algorithm}[t]
\caption{One training step of the Cellular Sheaf Neural Operator}
\label{alg:generic_csno_training_step}
\begin{algorithmic}[1]
\Require Input batch \(x\), target batch \(y\), cell complex \(K\), coboundaries \(\{d_k\}\)
\State Project data to cochains: \(u^n=\Pi_{\mathrm{data}\to K}(x)\)
\State Lift physical cochains to hidden cochains \(h_k^0\), \(k=0,\ldots,d\)
\For{\(\ell=0,\ldots,L-1\)}
    \State Update each \(h_k^\ell\) using within-dimension maps, learned sheaf restrictions, incidence messages, and optional sheaf-Hodge filtering
\EndFor
\State Apply task-specific update heads: \(\widehat{u}^{n+1}=\mathcal{U}_{\theta}(u^n,h^L,K)\)
\State Project back to the dataset representation: \(\widehat{y}=\Pi_{K\to\mathrm{data}}(\widehat{u}^{n+1})\)
\State Compute supervised loss and optional cochain compatibility diagnostics
\State Update parameters by backpropagation
\end{algorithmic}
\end{algorithm}

\section{Experimental design}
\label{sec:experiments}

The experiments evaluate CSNO on time-dependent turbulent MHD and fusion-equilibrium regression. The evaluation includes pointwise prediction error, magnetic-divergence behavior, autoregressive rollout stability, per-field accuracy, spectral behavior, equilibrium-regression accuracy, and computational cost. The numerical study uses two datasets: The Well MHD\_64 and ConStellaration \citep{ohana2024well,cadena2025constellaration}. These datasets serve as complementary MHD and plasma-physics applications of the broader cellular-sheaf surrogate framework.

All time-dependent three-dimensional experiments use the same preprocessed input and target tensors across models. The Well fields are cropped from the native grid before training. The U-Net and FNO baselines operate directly on these channel-first tensors. The Cellular Sheaf Neural Operator first maps the same tensors to cochain-valued fields on a cubical cell complex, applies the cellular-sheaf update internally, and then projects the prediction back to the common tensor representation for supervised training and evaluation.

\subsection{Datasets}
\label{subsec:datasets}

The Well MHD\_64 benchmark provides three-dimensional compressible MHD turbulence derived from the Catalogue for Astrophysical Turbulence Simulations and standardized as part of The Well benchmark collection \citep{burkhart2020catalogue,ohana2024well}. ConStellaration provides quasi-isodynamic stellarator plasma boundaries, optimization benchmarks, and ideal-MHD equilibrium data, and is used here as a fusion-equilibrium surrogate-modeling benchmark \citep{cadena2025constellaration}.

The Well MHD\_64 task is formulated as a supervised time-dependent prediction problem. Given one or more previous field states, the model predicts the next field state. If multiple input frames are used, the temporal history is stacked along the channel dimension before being passed to the model. ConStellaration is treated separately as a supervised equilibrium-regression task. Numeric features parsed from boundary, metric, configuration, and, when joinable, equilibrium records are mapped to selected numeric targets.

For The Well MHD\_64, stored samples have channel-first tensor form
\begin{equation*}
    u^n \in \mathbb{R}^{C \times N_x \times N_y \times N_z},
\end{equation*}
where \((N_x,N_y,N_z)\) denotes the preprocessed spatial grid. The state is constructed from density, magnetic-field, and velocity channels, ordered as density, \(B_x\), \(B_y\), \(B_z\), \(v_x\), \(v_y\), and \(v_z\). Training uses deterministic index-seeded random \(48^3\) crops by default, while validation and testing use deterministic centered crops unless cropping is disabled.

For CSNO, each preprocessed The Well grid is associated with a cubical cell complex. Cell-centered state channels are mapped to volume-cell cochains, magnetic components are mapped to face-based magnetic-flux cochains, and learned electromotive quantities are represented on edges. Predictions are projected back to the common channel-first tensor format so that CSNO, U-Net, and FNO are trained and evaluated against the same targets.

Preprocessed samples are cached locally after field extraction, channel construction, temporal stacking, cropping, normalization metadata construction, and tensor assembly. For CSNO, cubical-complex incidence, coboundary, and Hodge-related data are constructed and reused through an internal complex cache when the grid shape, spacing, periodicity, device, and dtype match. These caches reduce repeated HDF5 parsing, tensor assembly, and cell-complex construction overhead without changing the train, validation, or test splits.

The Well benchmark tests turbulent MHD scalability and diagnostics on a regular cubical complex. ConStellaration tests whether grouped sheaf-style representations and learned restrictions are useful for fusion-relevant equilibrium prediction, although it is not treated as a time-dependent cochain simulation.

\subsection{Baseline methods}
\label{subsec:baselines}

For the time-dependent three-dimensional MHD task, we compare CSNO with a three-dimensional convolutional U-Net \citep{ronneberger2015unet} and a three-dimensional Fourier Neural Operator \citep{li2021fourier,kovachki2023neural}. The U-Net provides a strong local convolutional baseline, while the FNO provides a global spectral neural-operator baseline. Both baselines operate directly on homogeneous channel-first grid tensors.

The Cellular Sheaf Neural Operator differs from these baselines in both representation and update structure. It maps grid tensors to cochain-valued fields on a cell complex, represents magnetic flux on faces, predicts edge-based electromotive quantities, and advances fluid variables through learned face fluxes and cell source terms. Cell-local feature spaces are coupled through incidence-aligned restriction maps and optional sheaf-Hodge-Laplacian message terms. In the current implementation, the magnetic update has the form
\begin{equation*}
    B^{n+1} = B^n - \Delta t\, d_1 E_\theta,
\end{equation*}
and the fluid-state update uses learned face fluxes and cell source terms. This allows the native magnetic update to be checked against the cochain identity \(d_2d_1=0\). These design choices make CSNO a comparison against U-Net and FNO not only at the level of neural backbone, but also at the level of computational representation and update parameterization.

For ConStellaration, the baseline is a fully connected regression model that maps flattened numeric features to equilibrium targets. The sheaf-style equilibrium model instead partitions the flattened feature vector into learned fibers, couples those fibers through restriction-style maps, and then predicts the selected equilibrium quantities.

\subsection{Evaluation metrics}
\label{subsec:metrics}

The primary accuracy metrics are mean-squared error, mean absolute error, and relative \(L^2\) error:
\begin{equation}
    \mathrm{MSE}(\widehat{u},u)
    =
    \frac{1}{|\Omega|C}
    \sum_{c=1}^C \sum_{x \in \Omega}
    \left|\widehat{u}_c(x)-u_c(x)\right|^2,
\end{equation}
\begin{equation}
    \mathrm{RelL2}(\widehat{u},u)
    =
    \frac{\|\widehat{u}-u\|_2}{\|u\|_2}.
\end{equation}
We also report per-channel or per-target relative errors for the variables present in each dataset, including density, magnetic-field components, velocity components, and ConStellaration equilibrium-related targets. For cropped data, all grid-based metrics are computed on the preprocessed grid used for model input and output.

Because magnetic-divergence control is central to the MHD application, we report grid-based magnetic-divergence diagnostics for all time-dependent models when magnetic channels are available. The diagnostic is computed after predictions are represented in channel-first tensor form:
\begin{equation}
    \nabla_h \cdot B
    =
    D_x B_x + D_y B_y + D_z B_z.
\end{equation}
The current implementation uses periodic finite differences for this diagnostic. We report the root-mean-square magnetic-divergence magnitude and a relative magnetic-divergence measure normalized by magnetic-field magnitude. These diagnostics are motivated by the known role of numerical magnetic divergence in producing nonphysical MHD behavior \citep{brackbill1980effect,toth2000divb}.

For CSNO, we additionally verify the native cochain-level magnetic update. Since
\begin{equation}
    d_2 B^{n+1} - d_2 B^n
    =
    -\Delta t\, d_2 d_1 E_\theta,
\end{equation}
the preservation error should be controlled by the discrete identity \(d_2d_1=0\). This verification checks the structure-preserving update in the model's native representation. The main cross-model divergence metrics are still computed on projected grid tensors so that CSNO, U-Net, and FNO are compared in a common representation.

Temporal stability is evaluated through autoregressive rollouts for The Well MHD\_64. Starting from an initial state, predictions are fed back as inputs to generate future states. We report rollout relative \(L^2\) error over time, rollout MSE over time, rollout magnetic divergence over time, mean rollout relative \(L^2\) error, final-step relative \(L^2\) error, and whether the rollout develops non-finite values or exceeds a configured large-value instability threshold. The implementation also reports an energy-like one-step drift based on \(0.5\,\mathrm{mean}(u^2)\) relative to the target value. It does not, by default, compute a physically decomposed kinetic-plus-magnetic rollout drift relative to the initial condition.

For turbulent or multiscale fields, we compute a spectral diagnostic when the prediction is a three-dimensional grid tensor. The implemented metric compares isotropic three-dimensional power spectra computed from FFTs \citep{takamoto2022pdebench,ohana2024well}. For ConStellaration, we report mean-squared error, mean absolute error, relative \(L^2\) error, and per-target relative error. We also report computational metrics available from the runner, including parameter count, inference time per batch, epoch timing, throughput in samples per second, and GPU memory allocation and reservation when CUDA is available.

\subsection{Statistical protocol}
\label{subsec:statistics}

The final reported experiments are repeated over multiple random seeds. Unless otherwise stated, final comparisons use ten seeds; reduced exploratory or fast-development runs may use fewer seeds and are labeled accordingly. Each seed controls Python, NumPy, and PyTorch random number generation, data-loader shuffling, model initialization, and stochastic training operations. For each seed, the runner saves both the best validation checkpoint and the final checkpoint. In the current experiment script, validation and test metrics are computed from the trained model instance after fitting unless an analysis script explicitly reloads the best validation checkpoint. For scalar metrics, we report mean performance with variability across seeds. Paired comparisons are performed across matching seeds between the Cellular Sheaf Neural Operator and each baseline. For each paired comparison, we report the mean paired difference, relative percentage improvement, bootstrap confidence interval, and paired significance test when appropriate. Deterministic bootstrap confidence intervals over seeds are computed using up to \(10{,}000\) resamples. Raw per-seed metrics are saved to support reproducibility and independent post-processing.

\subsection{Hardware and software}
\label{subsec:hardware_software}

All experiments were run on a local Linux workstation with an NVIDIA RTX PRO 6000 Blackwell Workstation Edition GPU, an Intel Core Ultra 9 285K CPU, and 128 GB of DDR5 ECC memory. Experiments were implemented in Python using PyTorch with CUDA acceleration \citep{paszke2019pytorch}. Models were trained in single-GPU mode using bfloat16 automatic mixed precision on CUDA, with parameters and optimizer states kept in full precision. TensorFloat-32 operations were enabled where supported. PyTorch compilation was enabled for selected dense baselines and disabled by default for FNO, CSNO, and the sheaf-equilibrium model. All timing and memory measurements were collected on the same workstation using the same software environment.

The software stack included Python, PyTorch, CUDA, NumPy, SciPy, pandas, h5py, tqdm, and Matplotlib. Sparse cell-complex operations used PyTorch sparse operations, with mixed precision disabled around sparse matrix multiplications when required. The Well MHD\_64 data were loaded from local HDF5 files, and the ConStellaration subset was loaded from local JSONL records. Cached tensor samples, optimized data loaders, and cached cell-complex incidence, coboundary, and geometric data were used when compatible. Code, preprocessing scripts, configurations, and reproducibility scripts will be made publicly available upon publication.

\section{Numerical results}
\label{sec:results}

\subsection{One-step prediction accuracy}
\label{subsec:one_step_results}

Table~\ref{tab:wells_main_metrics} and Fig.~\ref{fig:wells_aggregate_summary} show that U-Net gives the lowest one-step pointwise error on The Well MHD\_64. Its mean MSE, MAE, and relative \(L^2\) error are lower than both neural-operator models. CSNO therefore should not be interpreted as a uniformly better short-horizon field predictor than the convolutional baseline.

The comparison with FNO is more favorable. CSNO reduces one-step MSE from \(1.342\) to \(1.234\), MAE from \(0.430\) to \(0.417\), and relative \(L^2\) error from \(0.405\) to \(0.388\). These correspond to relative reductions of approximately \(8.0\%\), \(2.8\%\), and \(4.4\%\), respectively. Thus, CSNO improves over FNO on ordinary one-step prediction metrics, while U-Net remains substantially stronger on pointwise reconstruction.

\begin{table}[b]
\centering
\caption{Aggregate The Well MHD\_64 metrics over ten random seeds. Values are mean with two-sided \(95\%\) confidence intervals in parentheses. Lower is better for all metrics}
\label{tab:wells_main_metrics}
\resizebox{\linewidth}{!}{
\begin{tabular}{lccc}
\toprule
Metric & U-Net & FNO & CSNO \\
\midrule
MSE
& \(0.329\;(0.325,0.333)\)
& \(1.342\;(1.338,1.346)\)
& \(1.234\;(1.229,1.240)\) \\
MAE
& \(0.250\;(0.248,0.252)\)
& \(0.430\;(0.429,0.430)\)
& \(0.417\;(0.417,0.418)\) \\
Relative \(L^2\)
& \(0.209\;(0.208,0.211)\)
& \(0.405\;(0.405,0.406)\)
& \(0.388\;(0.387,0.388)\) \\
Magnetic divergence \(L^2\)
& \(0.250\;(0.249,0.252)\)
& \(0.242\;(0.242,0.243)\)
& \(0.235\;(0.235,0.236)\) \\
Relative magnetic divergence
& \(0.203\;(0.202,0.203)\)
& \(0.202\;(0.201,0.202)\)
& \(0.195\;(0.195,0.195)\) \\
Spectral error
& \(0.0814\;(0.0761,0.0867)\)
& \(0.0465\;(0.0408,0.0523)\)
& \(0.0233\;(0.0228,0.0237)\) \\
Mean rollout relative \(L^2\)
& \(0.608\;(0.605,0.612)\)
& \(0.510\;(0.508,0.512)\)
& \(0.486\;(0.484,0.487)\) \\
Final-step rollout relative \(L^2\)
& \(0.899\;(0.893,0.905)\)
& \(0.625\;(0.620,0.629)\)
& \(0.591\;(0.587,0.594)\) \\
\bottomrule
\end{tabular}}
\end{table}

\begin{figure}[t]
\centering
\begin{tikzpicture}
\begin{groupplot}[
    group style={
        group size=1 by 2,
        vertical sep=1.25cm,
        group name=summaryplots
    },
    ybar,
    scale only axis,
    width=0.65\linewidth,
    height=0.28\linewidth,
    ymin=0,
    ylabel={Norm. error},
    enlarge x limits=0.24,
    grid=major,
    grid style={dashed,gray!30},
    tick label style={font=\small},
    label style={font=\small},
    title style={font=\small, yshift=-2pt},
    xticklabel style={font=\small},
]

\nextgroupplot[
    title={One-step errors},
    ymax=4.4,
    symbolic x coords={MSE,Rel.\ $L^2$,MAE},
    xtick=data,
    legend to name=summarylegend,
    legend columns=3,
    legend style={
        draw=none,
        font=\small,
        /tikz/every even column/.append style={column sep=0.4cm}
    }
]
\addplot+[ybar, bar width=10pt] coordinates {
    (MSE,1.000)
    (Rel.\ $L^2$,1.000)
    (MAE,1.000)
};
\addplot+[ybar, bar width=10pt] coordinates {
    (MSE,4.079)
    (Rel.\ $L^2$,1.939)
    (MAE,1.719)
};
\addplot+[ybar, bar width=10pt] coordinates {
    (MSE,3.751)
    (Rel.\ $L^2$,1.854)
    (MAE,1.668)
};
\legend{U-Net,FNO,CSNO}

\nextgroupplot[
    title={Structure-sensitive errors},
    ymax=3.7,
    symbolic x coords={Div.,Rollout,Spectral},
    xtick=data,
]
\addplot+[ybar, bar width=10pt] coordinates {
    (Div.,1.063)
    (Rollout,1.252)
    (Spectral,3.497)
};
\addplot+[ybar, bar width=10pt] coordinates {
    (Div.,1.030)
    (Rollout,1.049)
    (Spectral,2.000)
};
\addplot+[ybar, bar width=10pt] coordinates {
    (Div.,1.000)
    (Rollout,1.000)
    (Spectral,1.000)
};

\end{groupplot}

\node[anchor=south] at ($(summaryplots c1r1.north)+(0,0.72cm)$)
{\pgfplotslegendfromname{summarylegend}};
\end{tikzpicture}
\caption{Aggregate The Well MHD\_64 diagnostics, normalized by the best model for each metric. Lower is better}
\label{fig:wells_aggregate_summary}
\end{figure}

\begin{figure}[t]
    \centering
    \begin{tabular}{c}
        \includegraphics[width=0.65\linewidth]{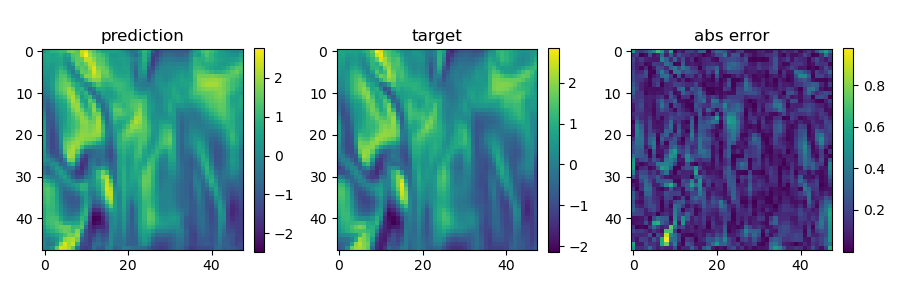}
        \\
        U\text{-}Net
        \\[0.1em]
        \includegraphics[width=0.65\linewidth]{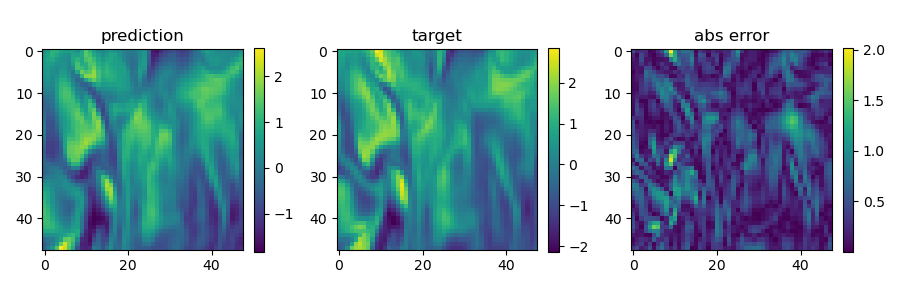}
        \\
        FNO
        \\[0.1em]
        \includegraphics[width=0.65\linewidth]{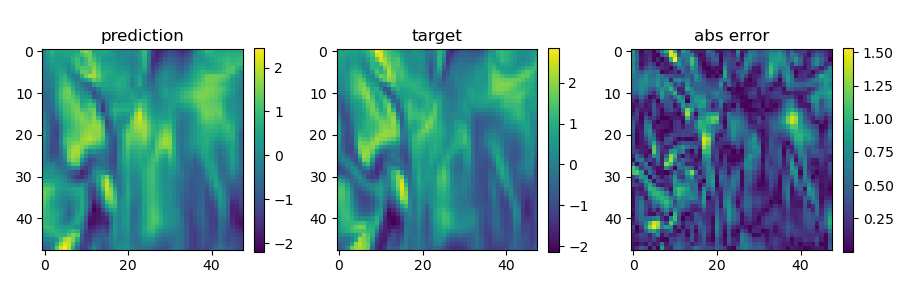}
        \\
        CSNO
    \end{tabular}
    \caption{Representative one-step prediction examples from seed 0 on The Well MHD\_64. All panels use the same visualization format produced by the evaluation pipeline}
    \label{fig:wells_prediction_examples}
\end{figure}

\subsection{Magnetic divergence control}
\label{subsec:divergence_results}

CSNO gives the lowest aggregate magnetic-divergence diagnostics among the three time-dependent models. Its mean grid-space magnetic-divergence \(L^2\) value is \(0.235\), compared with \(0.242\) for FNO and \(0.250\) for U-Net. The relative magnetic-divergence values are \(0.195\) for CSNO, \(0.202\) for FNO, and \(0.203\) for U-Net. This corresponds to a relative-divergence reduction of approximately \(3.3\%\) against FNO and \(3.7\%\) against U-Net.

These improvements are modest but consistent with the intended role of the architecture. The reported grid-space divergence is computed after all predictions are represented in the common tensor format. For CSNO, this is distinct from the native cochain-level update, where the magnetic field is advanced so that \(d_2B^{n+1}=d_2B^n\) up to numerical precision. The grid-space diagnostic is therefore an end-to-end test that includes both the learned update and the projection between cochains and stored tensor fields.

\begin{figure}[t]
\centering
\begin{tikzpicture}
\begin{axis}[
    ybar,
    width=0.65\linewidth,
    height=0.42\linewidth,
    ymin=0.23,
    ymax=0.255,
    ylabel={Magnetic-divergence \(L^2\)},
    symbolic x coords={U-Net,FNO,CSNO},
    xtick=data,
    bar width=40pt,
    enlarge x limits=0.25,
    grid=major,
    grid style={dashed,gray!30},
    tick label style={font=\small},
    label style={font=\small},
    ytick={0.23,0.235,0.24,0.245,0.25,0.255},
    error bars/y dir=both,
    error bars/y explicit,
]
\addplot+[
    error bars/.cd,
    y dir=both,
    y explicit
]
coordinates {
    (U-Net,0.250204) +- (0,0.001518)
    (FNO,0.242386) +- (0,0.000542)
    (CSNO,0.235430) +- (0,0.000552)
};
\end{axis}
\end{tikzpicture}
\caption{Aggregate magnetic-divergence \(L^2\) diagnostic on The Well MHD\_64 over ten random seeds. Error bars show two-sided \(95\%\) confidence intervals. CSNO obtains the lowest mean magnetic-divergence error among the three models}
\label{fig:wells_divergence}
\end{figure}

\subsection{Rollout stability}
\label{subsec:rollout_results}

Autoregressive rollout is the clearest empirical advantage for CSNO on The Well MHD\_64. The mean rollout relative \(L^2\) error is \(0.486\) for CSNO, compared with \(0.510\) for FNO and \(0.608\) for U-Net. The final-step rollout relative \(L^2\) error is \(0.591\) for CSNO, \(0.625\) for FNO, and \(0.899\) for U-Net. CSNO therefore reduces mean rollout error by approximately \(4.7\%\) relative to FNO and \(20.1\%\) relative to U-Net. At the final rollout step, the corresponding reductions are \(5.5\%\) relative to FNO and \(34.3\%\) relative to U-Net.

This result separates one-step interpolation quality from dynamical usefulness. Although U-Net gives the lowest one-step error, its autoregressive error grows much more severely. CSNO has higher one-step pointwise error than U-Net, but produces more stable rollout behavior. This supports the main computational claim of the paper: a surrogate organized around cochain placement and compatibility-preserving updates can improve long-horizon behavior even when it does not minimize one-step tensor error.

\begin{figure}[t]
\centering
\begin{tikzpicture}
\begin{axis}[
    ybar,
    width=0.65\linewidth,
    height=0.42\linewidth,
    ymin=0.4,
    ymax=0.65,
    ylabel={Mean rollout relative \(L^2\)},
    symbolic x coords={U-Net,FNO,CSNO},
    xtick=data,
    bar width=40pt,
    enlarge x limits=0.25,
    grid=major,
    grid style={dashed,gray!30},
    tick label style={font=\small},
    label style={font=\small},
    ytick={0.4,0.45,0.5,0.55,0.6,0.65},
    error bars/y dir=both,
    error bars/y explicit,
]
\addplot+[
    error bars/.cd,
    y dir=both,
    y explicit
]
coordinates {
    (U-Net,0.608432) +- (0,0.003112)
    (FNO,0.509792) +- (0,0.001978)
    (CSNO,0.485934) +- (0,0.001552)
};
\end{axis}
\end{tikzpicture}
\caption{Aggregate autoregressive rollout behavior on The Well MHD\_64 over ten random seeds. Error bars show two-sided \(95\%\) confidence intervals. CSNO obtains the lowest mean rollout relative \(L^2\) error despite not having the lowest one-step pointwise error}
\label{fig:wells_rollout}
\end{figure}

\subsection{Spectral and physical diagnostics}
\label{subsec:spectral_physical}

The spectral diagnostic also favors CSNO. The mean isotropic three-dimensional spectral error is \(0.0233\) for CSNO, compared with \(0.0465\) for FNO and \(0.0814\) for U-Net. CSNO therefore reduces spectral error by approximately \(50.0\%\) relative to FNO and \(71.4\%\) relative to U-Net. This is notable because FNO is explicitly built around Fourier-domain operators and might be expected to have an advantage on spectral quantities.

The energy-like one-step drift gives a more mixed picture. U-Net obtains the lowest value, \(0.106\), while CSNO obtains \(0.155\) and FNO obtains \(0.177\). CSNO improves over FNO by approximately \(12.8\%\), but does not improve over U-Net on this metric. Therefore, the physical diagnostics do not all favor the same model. CSNO is not uniformly dominant across all physical diagnostics, but improves the metrics most closely tied to the proposed structure-preserving design, especially divergence, spectral error, and rollout behavior.

\begin{figure}[ht]
\centering
\begin{tikzpicture}
\begin{axis}[
    ybar,
    width=0.65\linewidth,
    height=0.42\linewidth,
    ymin=0,
    ymax=0.105,
    ylabel={Spectral error},
    symbolic x coords={U-Net,FNO,CSNO},
    xtick=data,
    bar width=40pt,
    enlarge x limits=0.25,
    grid=major,
    grid style={dashed,gray!30},
    tick label style={font=\small},
    label style={font=\small},
    ytick={0,0.02,0.04,0.06,0.08,0.10},
    error bars/y dir=both,
    error bars/y explicit,
]
\addplot+[
    error bars/.cd,
    y dir=both,
    y explicit
]
coordinates {
    (U-Net,0.081396) +- (0,0.005262)
    (FNO,0.046542) +- (0,0.005730)
    (CSNO,0.023274) +- (0,0.000466)
};
\end{axis}
\end{tikzpicture}
\caption{Aggregate isotropic three-dimensional spectral error on The Well MHD\_64 over ten random seeds. Error bars show two-sided \(95\%\) confidence intervals. CSNO obtains the lowest spectral error}
\label{fig:wells_spectra}
\end{figure}

\subsection{Fusion-equilibrium regression}
\label{subsec:constellaration_results}

ConStellaration is used to evaluate whether the sheaf-style grouped representation is useful beyond time-dependent grid prediction. On this problem, the sheaf-equilibrium model substantially outperforms the fully connected MLP baseline. Table~\ref{tab:constellaration_results} summarizes the results. The MSE decreases from \(0.0193\) to \(0.00548\), a reduction of approximately \(71.6\%\). The MAE decreases from \(0.0664\) to \(0.0296\), and relative \(L^2\) error decreases from \(0.129\) to \(0.0643\). The energy-like drift metric also improves from \(0.0256\) to \(0.0174\).

These results suggest that learned grouping and restriction-style coupling are useful for structured equilibrium-regression tasks. The sheaf-equilibrium model uses more parameters, with approximately \(1.42\) million parameters compared with \(231\) thousand for the MLP. However, the measured inference time per batch is essentially unchanged, with both models near \(2.97\) in the recorded units.

\begin{table}[t]
\centering
\caption{ConStellaration equilibrium-regression metrics over ten random seeds. Values are mean with two-sided \(95\%\) confidence intervals in parentheses. Lower is better for all metrics}
\label{tab:constellaration_results}
\resizebox{\linewidth}{!}{
\begin{tabular}{lcc}
\toprule
Metric & MLP & Sheaf-equilibrium \\
\midrule
MSE
& \(0.0193\;(0.0184,0.0202)\)
& \(0.00548\;(0.00526,0.00570)\) \\
MAE
& \(0.0664\;(0.0658,0.0670)\)
& \(0.0296\;(0.0292,0.0299)\) \\
Relative \(L^2\)
& \(0.129\;(0.126,0.131)\)
& \(0.0643\;(0.0632,0.0654)\) \\
Energy-like drift
& \(0.0256\;(0.0221,0.0290)\)
& \(0.0174\;(0.0166,0.0182)\) \\
Inference time per batch
& \(2.975\;(2.963,2.986)\)
& \(2.972\;(2.964,2.981)\) \\
Parameter count
& \(230{,}927\)
& \(1{,}419{,}279\) \\
\bottomrule
\end{tabular}}
\end{table}

\subsection{Computational cost}
\label{subsec:computational_cost}

CSNO is substantially more parameter-efficient than the dense baselines on The Well MHD\_64. It uses \(60{,}393\) parameters, compared with \(2.10\) million for FNO and \(1.29\) million for U-Net. This corresponds to approximately \(97.1\%\) fewer parameters than FNO and \(95.3\%\) fewer parameters than U-Net.

This parameter efficiency does not translate directly into faster inference. The mean inference time per batch is \(0.0212\) for CSNO, compared with \(0.0128\) for FNO and \(0.0109\) for U-Net. Thus, CSNO is approximately \(1.65\times\) slower than FNO and \(1.94\times\) slower than U-Net in the measured inference loop. This reflects the overhead of cochain projection, sparse incidence operations, learned restrictions, and Hodge-weighted message passing. The practical value of CSNO therefore depends on the application. If the goal is only minimum one-step tensor error or fastest inference, U-Net is preferable in this experiment. If the goal is improved rollout behavior, spectral fidelity, divergence diagnostics, and compact parameterization, CSNO provides a favorable tradeoff.

\begin{table}[t]
\centering
\caption{The Well MHD\_64 computational cost. CSNO is substantially smaller than both dense baselines, but has higher measured inference time per batch}
\label{tab:wells_cost}
\begin{tabular}{lccc}
\toprule
Model & Parameters & Inference time per batch & Relative size vs. CSNO \\
\midrule
U-Net & \(1{,}290{,}343\) & \(0.0109\) & \(21.4\times\) \\
FNO & \(2{,}102{,}919\) & \(0.0128\) & \(34.8\times\) \\
CSNO & \(60{,}393\) & \(0.0212\) & \(1.0\times\) \\
\bottomrule
\end{tabular}
\end{table}

Altogether, the results highlight CSNO as a structure-aware surrogate designed for stable, physically organized prediction rather than only for minimizing one-step error. Although U-Net is stronger on one-step pointwise accuracy and the dense baselines are faster at inference, CSNO delivers better rollout behavior, spectral fidelity, magnetic-divergence diagnostics, and parameter efficiency. 

\section{Discussion}
\label{sec:discussion}

The results point to a broader issue in learned PDE surrogate modeling; a model which is accurate as a one-step regressor is not necessarily the best learned time-stepper. For applications such as forecasting, control, design iteration, and uncertainty propagation, the surrogate is often used autoregressively, so accumulated error, compatibility drift, spectral distortion, and constraint violation become more important than pointwise reconstruction. CSNO is designed to access the algebraic organization of the discretization rather than treating the state as a tensor of channels.

The magnetic-divergence results illustrate the value of this representation. The gain in projected grid-space divergence is modest, but the native magnetic update has a stronger interpretation because it factors through the edge-to-face coboundary. This separates two questions that are often conflated in neural surrogate evaluation: whether the architecture preserves a discrete compatibility relation internally, and whether the projected grid diagnostic fully reflects that preservation. For constrained systems, this distinction matters because physically meaningful structure may live in the computational representation rather than in the storage format of the dataset.

The comparison with FNO suggests that spectral parameterization is not the only route to good multiscale behavior. FNO is a natural baseline for grid-based PDE data, but CSNO improves the spectral diagnostic despite not being primarily a Fourier-domain architecture. This indicates that organizing variables by their geometric role can affect how multiscale structure is learned. In MHD, fluxes, circulations, and cell-centered quantities are not interchangeable channels; preserving this organization can improve behavior that is not captured by a generic global convolution alone.

The comparison with U-Net highlights the complementary tradeoff. A convolutional encoder--decoder is highly effective for short-horizon field reconstruction, but its strong one-step accuracy does not automatically translate into better autoregressive dynamics. CSNO gives up pointwise accuracy for stronger rollout, divergence, and spectral behavior, which is more relevant for models used as surrogate time-steppers. The ConStellaration results suggest that the same principle extends beyond MHD rollouts. Grouped feature spaces and learned restriction-style coupling also improve structured equilibrium regression. Taken together, these results argue for evaluation protocols that treat rollout growth, constraint violation, spectral structure, and model size as first-class quantities rather than secondary diagnostics.

\section{Limitations}
\label{sec:limitations}

The empirical results have several limitations. CSNO does not achieve the best one-step pointwise accuracy on The Well MHD\_64, and its projected grid-space magnetic-divergence improvements are modest. The divergence maps are also affected by boundary-sensitive finite-difference artifacts from cropped fields. The native cochain update preserves the corresponding algebraic quantity by construction, but the projected grid diagnostic still depends on interpolation, projection, and boundary handling. CSNO is also slower at inference than both dense baselines despite using far fewer parameters. Finally, the time-dependent validation is centered on The Well MHD\_64; ConStellaration is an equilibrium-regression task rather than a second autoregressive PDE benchmark. These limitations make the current experiments a validation of the cellular-sheaf inductive bias, not a final benchmark of the full generality of the framework. The strongest gains are concentrated in structure-sensitive diagnostics, so the method may be less attractive when ordinary one-step tensor error is the primary objective.

The method also has design limitations. The framework is intended for general oriented cell complexes, but the time-dependent experiments use cubical complexes induced by structured-grid data, so performance on genuinely unstructured production meshes remains untested. Exact preservation claims apply to native cochain updates, not necessarily to every projected grid diagnostic. CSNO also introduces modeling choices beyond U-Net or FNO, including variable placement, learned restriction parameterization, Hodge weights, cochain projection, and task-specific update heads. These choices are meaningful because structure-preserving neural discretizations depend on the chosen complex and function spaces \citep{shaffer2026geonew,kinch2025conditional}, but they require ablation studies. Like other neural operator surrogates, CSNO remains data-dependent: approximation and generalization depend on representation, training distribution, and operator class \citep{lanthaler2022error,adcock2025learning}, and autoregressive use can still accumulate one-step errors \citep{mccabe2023stability}.

Future work should focus on unstructured meshes and non-cubical cell complexes, where the cellular representation is more distinct from ordinary grid-based neural operators. Additional priorities include optimizing sparse and mixed dense--sparse workloads, performing component-wise ablations of restriction maps and update heads, and extending the framework to resistive or Hall MHD, adaptive meshes, boundary-aware projections, and longer rollout horizons.

\section{Conclusions}
\label{sec:conclusion}

This work introduced Cellular Sheaf Neural Operators, a discretization-aware neural operator framework for structure-preserving PDE surrogate modeling. CSNO represents physical states as cochain-valued fields on oriented cell complexes, couples local feature spaces through learned restriction maps, and uses incidence/Hodge-informed message passing to follow the geometry of the computational discretization. Its main architectural contribution is the use of compatibility-preserving update heads, where learned fluxes, circulations, sources, or potentials are routed through coboundary or flux maps so selected constraints are enforced by cell-complex identities.

The MHD instantiation demonstrates this idea in a setting where geometric compatibility is central. Magnetic flux is represented on faces and updated through an edge-based electromotive field, giving a constrained-transport-like magnetic update based on \(d_2d_1=0\). Fluid variables are advanced through learned face fluxes and cell source terms. This does not make CSNO a replacement for a conservative MHD solver, but it gives learned surrogates an update structure closer to compatible discretizations.

The numerical results show that CSNO is strongest on structure-sensitive diagnostics. On The Well MHD\_64, U-Net achieves the lowest one-step pointwise error, but CSNO achieves the best rollout behavior, magnetic-divergence diagnostics, spectral error, and parameter efficiency. Compared with FNO, CSNO improves both one-step prediction metrics and long-horizon diagnostics while using far fewer parameters. On ConStellaration, the sheaf-equilibrium model improves equilibrium-regression accuracy over the MLP baseline, suggesting that learned restriction-style coupling is useful beyond time-dependent MHD prediction.

Overall, cochain-valued representations and compatibility-preserving update heads provide a useful inductive bias for learned PDE surrogates in constrained multiphysics systems. The results also show that evaluation should not rely only on one-step tensor error, since rollout behavior, constraint violation, spectral structure, and parameter efficiency can change model rankings and reveal advantages hidden by short-horizon metrics.

\newpage

\section*{Data availability}

The Well MHD\_64 dataset used in this study is publicly available through The Well benchmark collection. The ConStellaration dataset used for the fusion-equilibrium regression experiments is publicly available from the dataset release described in the cited ConStellaration reference. The processed train, validation, and test splits used in this work are generated from these public datasets by the preprocessing scripts provided with the accompanying code repository.

Code for reproducing the experiments, generating the figures, and computing the reported metrics is available at https://github.com/lennonshikhman/sheaf-neural-operator.
The repository includes the model implementations, preprocessing scripts, experiment configurations, evaluation scripts, and plotting utilities. Intermediate cached tensors, trained model checkpoints, and per-seed metric files can be regenerated from the public datasets using the provided scripts.

\section*{Declaration of competing interest}

The authors declare the following financial interests/personal relationships which may be considered as potential competing interests: computational resources used in this work were provided by Dell Technologies. The authors declare no other known competing financial interests or personal relationships that could have appeared to influence the work reported in this paper.

\section*{CRediT authorship contribution statement}

\textbf{Lennon J. Shikhman:} Conceptualization, Data curation, Formal analysis, Funding acquisition, Investigation, Methodology, Project administration, Resources, Software, Supervision, Validation, Visualization, Writing -- original draft, Writing -- review and editing.

\textbf{Shane Gilbertie:} Investigation, Writing -- review and editing.

\section*{Acknowledgements}


\paragraph{Computational Resources} The authors gratefully acknowledge Dell Technologies, and in particular the Dell Pro Precision division, for providing computational resources that supported the experiments in this work. All experiments were conducted on a Dell Pro Max T2 workstation equipped with an Intel Core Ultra 9 285K processor, 128 GB of DDR5 ECC memory, and an NVIDIA RTX PRO 6000 Blackwell GPU.

\bibliographystyle{elsarticle-num-names}
\bibliography{references}

@article{
    shikhman2026diagnosing,
    title={Diagnosing Failure Modes of Neural Operators Across Diverse {PDE} Families},
    author={Lennon Shikhman},
    journal={Transactions on Machine Learning Research},
    issn={2835-8856},
    year={2026},
    url={https://openreview.net/forum?id=0S1LWZHQYn},
    note={}
}

@inproceedings{ronneberger2015unet,
    author="Ronneberger, Olaf
    and Fischer, Philipp
    and Brox, Thomas",
    editor="Navab, Nassir
    and Hornegger, Joachim
    and Wells, William M.
    and Frangi, Alejandro F.",
    title="U-Net: Convolutional Networks for Biomedical Image Segmentation",
    booktitle="Medical Image Computing and Computer-Assisted Intervention -- MICCAI 2015",
    year="2015",
    publisher="Springer International Publishing",
    address="Cham",
    pages="234--241",
    isbn="978-3-319-24574-4"    
}

@inbook{paszke2019pytorch,
    author = {Paszke, Adam and Gross, Sam and Massa, Francisco and Lerer, Adam and Bradbury, James and Chanan, Gregory and Killeen, Trevor and Lin, Zeming and Gimelshein, Natalia and Antiga, Luca and Desmaison, Alban and K\"{o}pf, Andreas and Yang, Edward and DeVito, Zach and Raison, Martin and Tejani, Alykhan and Chilamkurthy, Sasank and Steiner, Benoit and Fang, Lu and Bai, Junjie and Chintala, Soumith},
    title = {PyTorch: an imperative style, high-performance deep learning library},
    year = {2019},
    publisher = {Curran Associates Inc.},
    address = {Red Hook, NY, USA},
    booktitle = {Proceedings of the 33rd International Conference on Neural Information Processing Systems},
    articleno = {721},
    numpages = {12}
}

@article{burkhart2020catalogue,
  title={The catalogue for astrophysical turbulence simulations (cats)},
  author={Burkhart, B and Appel, SM and Bialy, S and Cho, J and Christensen, AJ and Collins, D and Federrath, Christoph and Fielding, DB and Finkbeiner, D and Hill, AS and others},
  journal={The Astrophysical Journal},
  volume={905},
  number={1},
  pages={14},
  year={2020},
  publisher={IOP Publishing}
}

@misc{cadena2025constellaration,
      title={ConStellaration: A dataset of QI-like stellarator plasma boundaries and optimization benchmarks}, 
      author={Santiago A. Cadena and Andrea Merlo and Emanuel Laude and Alexander Bauer and Atul Agrawal and Maria Pascu and Marija Savtchouk and Enrico Guiraud and Lukas Bonauer and Stuart Hudson and Markus Kaiser},
      year={2025},
      eprint={2506.19583},
      archivePrefix={arXiv},
      primaryClass={cs.LG},
      url={https://arxiv.org/abs/2506.19583}, 
}

@book{leveque2002finite,
  title     = {Finite Volume Methods for Hyperbolic Problems},
  author    = {LeVeque, Randall J.},
  year      = {2002},
  publisher = {Cambridge University Press},
  address   = {Cambridge}
}

@article{cockburn2001runge,
  title   = {Runge--Kutta discontinuous Galerkin methods for convection-dominated problems},
  author  = {Cockburn, Bernardo and Shu, Chi-Wang},
  journal = {Journal of Scientific Computing},
  volume  = {16},
  number  = {3},
  pages   = {173--261},
  year    = {2001}
}

@book{hesthaven2008nodal,
  title     = {Nodal Discontinuous Galerkin Methods: Algorithms, Analysis, and Applications},
  author    = {Hesthaven, Jan S. and Warburton, Tim},
  year      = {2008},
  publisher = {Springer},
  series    = {Texts in Applied Mathematics},
  volume    = {54}
}

@article{hyman1997adjoint,
  title   = {The adjoint operators for the natural discretizations of the divergence, gradient and curl on logically rectangular grids},
  author  = {Hyman, James M. and Shashkov, Mikhail},
  journal = {Applied Numerical Mathematics},
  volume  = {25},
  number  = {4},
  pages   = {413--442},
  year    = {1997}
}

@article{hyman1999orthogonal,
  title   = {The orthogonal decomposition theorems for mimetic finite difference methods},
  author  = {Hyman, James M. and Shashkov, Mikhail},
  journal = {SIAM Journal on Numerical Analysis},
  volume  = {36},
  number  = {3},
  pages   = {788--818},
  year    = {1999}
}

@article{hyman2000james,
    author = {Hyman, James},
    year = {2000},
    month = {08},
    pages = {},
    title = {Mimetic Finite Difference Methods for Maxwell's Equations and the Equations of Magnetic Diffusion},
    volume = {32},
    journal = {Progress In Electromagnetics Research},
    doi = {10.2528/PIER00080104}
}

@article{brezzi2005convergence,
  title   = {Convergence of the mimetic finite difference method for diffusion problems on polyhedral meshes},
  author  = {Brezzi, Franco and Lipnikov, Konstantin and Shashkov, Mikhail},
  journal = {SIAM Journal on Numerical Analysis},
  volume  = {43},
  number  = {5},
  pages   = {1872--1896},
  year    = {2005}
}

@article{arnold2006feec,
  title   = {Finite element exterior calculus, homological techniques, and applications},
  author  = {Arnold, Douglas N. and Falk, Richard S. and Winther, Ragnar},
  journal = {Acta Numerica},
  volume  = {15},
  pages   = {1--155},
  year    = {2006},
  doi     = {10.1017/S0962492906210018}
}

@article{arnold2010feec,
  title   = {Finite element exterior calculus: From Hodge theory to numerical stability},
  author  = {Arnold, Douglas N. and Falk, Richard S. and Winther, Ragnar},
  journal = {Bulletin of the American Mathematical Society},
  volume  = {47},
  number  = {2},
  pages   = {281--354},
  year    = {2010}
}

@misc{desbrun2005dec,
      title={Discrete Exterior Calculus}, 
      author={Mathieu Desbrun and Anil N. Hirani and Melvin Leok and Jerrold E. Marsden},
      year={2005},
      eprint={math/0508341},
      archivePrefix={arXiv},
      primaryClass={math.DG},
      url={https://arxiv.org/abs/math/0508341}, 
}

@phdthesis{hirani2003dec,
  title  = {Discrete Exterior Calculus},
  author = {Hirani, Anil Nirmal},
  school = {California Institute of Technology},
  year   = {2003}
}

@book{bossavit1998computational,
  title     = {Computational Electromagnetism: Variational Formulations, Complementarity, Edge Elements},
  author    = {Bossavit, Alain},
  year      = {1998},
  publisher = {Academic Press},
  address   = {San Diego}
}

@article{evans1988constrained,
    title   = {Simulation of magnetohydrodynamic flows: A constrained transport method},
    author  = {Evans, Charles R. and Hawley, John F.},
    journal = {The Astrophysical Journal},
    volume  = {332},
    pages   = {659--677},
    year    = {1988}
}

@article{toth2000divb,
  title   = {The {$\nabla \cdot B = 0$} constraint in shock-capturing magnetohydrodynamics codes},
  author  = {T{\'o}th, G{\'a}bor},
  journal = {Journal of Computational Physics},
  volume  = {161},
  number  = {2},
  pages   = {605--652},
  year    = {2000},
  doi     = {10.1006/jcph.2000.6519}
}

@article{dedner2002hyperbolic,
  title   = {Hyperbolic divergence cleaning for the MHD equations},
  author  = {Dedner, Andreas and Kemm, Friedemann and Kr{\"o}ner, Dietmar and Munz, Claus-Dieter and Schnitzer, Thomas and Wesenberg, Matthias},
  journal = {Journal of Computational Physics},
  volume  = {175},
  number  = {2},
  pages   = {645--673},
  year    = {2002}
}

@article{balsara2001amr,
   title={Divergence-Free Adaptive Mesh Refinement for Magnetohydrodynamics},
   volume={174},
   ISSN={0021-9991},
   url={http://dx.doi.org/10.1006/jcph.2001.6917},
   DOI={10.1006/jcph.2001.6917},
   number={2},
   journal={Journal of Computational Physics},
   publisher={Elsevier BV},
   author={Balsara, Dinshaw S.},
   year={2001},
   month=Dec, pages={614–648}
}

@article{kovachki2023neural,
  title   = {Neural Operator: Learning Maps Between Function Spaces with Applications to PDEs},
  author  = {Kovachki, Nikola and Li, Zongyi and Liu, Burigede and Azizzadenesheli, Kamyar and Bhattacharya, Kaushik and Stuart, Andrew and Anandkumar, Anima},
  journal = {Journal of Machine Learning Research},
  volume  = {24},
  number  = {89},
  pages   = {1--97},
  year    = {2023}
}

@article{lu2021learning,
   title={Learning nonlinear operators via DeepONet based on the universal approximation theorem of operators},
   volume={3},
   ISSN={2522-5839},
   url={http://dx.doi.org/10.1038/s42256-021-00302-5},
   DOI={10.1038/s42256-021-00302-5},
   number={3},
   journal={Nature Machine Intelligence},
   publisher={Springer Science and Business Media LLC},
   author={Lu, Lu and Jin, Pengzhan and Pang, Guofei and Zhang, Zhongqiang and Karniadakis, George Em},
   year={2021},
   month=Mar,
   pages={218–229}
}

@inproceedings{li2021fourier,
    title={Fourier Neural Operator for Parametric Partial Differential Equations},
    author={Zongyi Li and Nikola Borislavov Kovachki and Kamyar Azizzadenesheli and Burigede liu and Kaushik Bhattacharya and Andrew Stuart and Anima Anandkumar},
    booktitle={International Conference on Learning Representations},
    year={2021},
    url={https://openreview.net/forum?id=c8P9NQVtmnO}
}

@misc{li2020neural,
      title={Neural Operator: Graph Kernel Network for Partial Differential Equations}, 
      author={Zongyi Li and Nikola Kovachki and Kamyar Azizzadenesheli and Burigede Liu and Kaushik Bhattacharya and Andrew Stuart and Anima Anandkumar},
      year={2020},
      eprint={2003.03485},
      archivePrefix={arXiv},
      primaryClass={cs.LG},
      url={https://arxiv.org/abs/2003.03485}, 
}

@inproceedings{pfaff2021learning,
    title={Learning Mesh-Based Simulation with Graph Networks},
    author={Tobias Pfaff and Meire Fortunato and Alvaro Sanchez-Gonzalez and Peter Battaglia},
    booktitle={International Conference on Learning Representations},
    year={2021},
    url={https://openreview.net/forum?id=roNqYL0_XP}
}

@article{li2023fourier,
  title   = {Fourier Neural Operator with Learned Deformations for PDEs on General Geometries},
  author  = {Li, Zongyi and Huang, Daniel Zhengyu and Liu, Burigede and Anandkumar, Anima},
  journal = {Journal of Machine Learning Research},
  volume  = {24},
  number  = {388},
  pages   = {1--26},
  year    = {2023}
}

@article{li2024physics,
    author = {Li, Zongyi and Zheng, Hongkai and Kovachki, Nikola and Jin, David and Chen, Haoxuan and Liu, Burigede and Azizzadenesheli, Kamyar and Anandkumar, Anima},
    title = {Physics-Informed Neural Operator for Learning Partial Differential Equations},
    year = {2024},
    issue_date = {September 2024},
    publisher = {Association for Computing Machinery},
    address = {New York, NY, USA},
    volume = {1},
    number = {3},
    url = {https://doi.org/10.1145/3648506},
    doi = {10.1145/3648506},
    journal = {ACM / IMS J. Data Sci.},
    month = may,
    articleno = {9},
    numpages = {27}
}

@inproceedings{takamoto2022pdebench,
    author = {Takamoto, Makoto and Praditia, Timothy and Leiteritz, Raphael and MacKinlay, Dan and Alesiani, Francesco and Pfl\"{u}ger, Dirk and Niepert, Mathias},
    title = {PDEBENCH: an extensive benchmark for scientific machine learning},
    year = {2022},
    isbn = {9781713871088},
    publisher = {Curran Associates Inc.},
    address = {Red Hook, NY, USA},
    booktitle = {Proceedings of the 36th International Conference on Neural Information Processing Systems},
    articleno = {117},
    numpages = {16},
    location = {New Orleans, LA, USA},
    series = {NIPS '22}
}

@inproceedings{ohana2024well,
    author = {Ohana, Ruben and McCabe, Michael and Meyer, Lucas and Morel, Rudy and Agocs, Fruzsina J. and Beneitez, Miguel and Berger, Marsha and Burkhart, Blakesley and Dalziel, Stuart B. and Fielding, Drummond B. and Fortunato, Daniel and Goldberg, Jared A. and Hirashima, Keiya and Jiang, Yan-Fei and Kerswell, Rich R. and Maddu, Suryanarayana and Miller, Jonah and Mukhopadhyay, Payel and Nixon, Stefan S. and Shen, Jeff and Watteaux, Romain and Blancard, Bruno R\'{e}galdo-Saint and Rozet, Fran\c{c}ois and Parker, Liam and Cranmer, Miles and Ho, Shirley},
    booktitle = {Advances in Neural Information Processing Systems},
    doi = {10.52202/079017-1430},
    editor = {A. Globerson and L. Mackey and D. Belgrave and A. Fan and U. Paquet and J. Tomczak and C. Zhang},
    pages = {44989--45037},
    publisher = {Curran Associates, Inc.},
    title = {The Well: a Large-Scale Collection of Diverse Physics Simulations for Machine Learning},
    url = {https://proceedings.neurips.cc/paper_files/paper/2024/file/4f9a5acd91ac76569f2fe291b1f4772b-Paper-Datasets_and_Benchmarks_Track.pdf},
    volume = {37},
    year = {2024}
}

@inproceedings{kipf2017semi,
    title={Semi-Supervised Classification with Graph Convolutional Networks},
    author={Thomas N. Kipf and Max Welling},
    booktitle={International Conference on Learning Representations},
    year={2017},
    url={https://openreview.net/forum?id=SJU4ayYgl}
}

@inproceedings{gilmer2017neural,
    title     = {Neural Message Passing for Quantum Chemistry},
    author    = {Gilmer, Justin and Schoenholz, Samuel S. and Riley, Patrick F. and Vinyals, Oriol and Dahl, George E.},
    booktitle = {International Conference on Machine Learning},
    pages     = {1263--1272},
    year      = {2017},
    publisher = {PMLR}
}

@InProceedings{sanchez2018graph,
    title = 	 {Graph Networks as Learnable Physics Engines for Inference and Control},
    author =       {Sanchez-Gonzalez, Alvaro and Heess, Nicolas and Springenberg, Jost Tobias and Merel, Josh and Riedmiller, Martin and Hadsell, Raia and Battaglia, Peter},
    booktitle = 	 {Proceedings of the 35th International Conference on Machine Learning},
    pages = 	 {4470--4479},
    year = 	 {2018},
    editor = 	 {Dy, Jennifer and Krause, Andreas},
    volume = 	 {80},
    series = 	 {Proceedings of Machine Learning Research},
    month = 	 {10--15 Jul},
    publisher =    {PMLR},
    url = 	 {https://proceedings.mlr.press/v80/sanchez-gonzalez18a.html}
}

@misc{bronstein2021geometric,
      title={Geometric Deep Learning: Grids, Groups, Graphs, Geodesics, and Gauges}, 
      author={Michael M. Bronstein and Joan Bruna and Taco Cohen and Petar Veličković},
      year={2021},
      eprint={2104.13478},
      archivePrefix={arXiv},
      primaryClass={cs.LG},
      url={https://arxiv.org/abs/2104.13478}, 
}

@misc{hajij2023topological,
      title={Topological Deep Learning: Going Beyond Graph Data}, 
      author={Mustafa Hajij and Ghada Zamzmi and Theodore Papamarkou and Nina Miolane and Aldo Guzmán-Sáenz and Karthikeyan Natesan Ramamurthy and Tolga Birdal and Tamal K. Dey and Soham Mukherjee and Shreyas N. Samaga and Neal Livesay and Robin Walters and Paul Rosen and Michael T. Schaub},
      year={2023},
      eprint={2206.00606},
      archivePrefix={arXiv},
      primaryClass={cs.LG},
      url={https://arxiv.org/abs/2206.00606}, 
}

@misc{papillon2024architectures,
      title={Architectures of Topological Deep Learning: A Survey of Message-Passing Topological Neural Networks}, 
      author={Mathilde Papillon and Sophia Sanborn and Mustafa Hajij and Nina Miolane},
      year={2024},
      eprint={2304.10031},
      archivePrefix={arXiv},
      primaryClass={cs.LG},
      url={https://arxiv.org/abs/2304.10031}, 
}

@InProceedings{papamarkou2024topological,
  title = 	 {Position: Topological Deep Learning is the New Frontier for Relational Learning},
  author =       {Papamarkou, Theodore and Birdal, Tolga and Bronstein, Michael M. and Carlsson, Gunnar E. and Curry, Justin and Gao, Yue and Hajij, Mustafa and Kwitt, Roland and Lio, Pietro and Di Lorenzo, Paolo and Maroulas, Vasileios and Miolane, Nina and Nasrin, Farzana and Natesan Ramamurthy, Karthikeyan and Rieck, Bastian and Scardapane, Simone and Schaub, Michael T and Veli\v{c}kovi\'{c}, Petar and Wang, Bei and Wang, Yusu and Wei, Guowei and Zamzmi, Ghada},
  booktitle = 	 {Proceedings of the 41st International Conference on Machine Learning},
  pages = 	 {39529--39555},
  year = 	 {2024},
  editor = 	 {Salakhutdinov, Ruslan and Kolter, Zico and Heller, Katherine and Weller, Adrian and Oliver, Nuria and Scarlett, Jonathan and Berkenkamp, Felix},
  volume = 	 {235},
  series = 	 {Proceedings of Machine Learning Research},
  month = 	 {21--27 Jul},
  publisher =    {PMLR},
  url = 	 {https://proceedings.mlr.press/v235/papamarkou24a.html}
}

@article{ebli2020simplicial,
  title   = {Simplicial Neural Networks},
  author  = {Ebli, Stefania and Defferrard, Micha{\"e}l and Spreemann, Gard},
  journal = {arXiv preprint arXiv:2010.03633},
  year    = {2020}
}

@InProceedings{bodnar2021weisfeiler,
    title = 	 {Weisfeiler and Lehman Go Topological: Message Passing Simplicial Networks},
    author =       {Bodnar, Cristian and Frasca, Fabrizio and Wang, Yuguang and Otter, Nina and Montufar, Guido F and Li{\'o}, Pietro and Bronstein, Michael},
    booktitle = 	 {Proceedings of the 38th International Conference on Machine Learning},
    pages = 	 {1026--1037},
    year = 	 {2021},
    editor = 	 {Meila, Marina and Zhang, Tong},
    volume = 	 {139},
    series = 	 {Proceedings of Machine Learning Research},
    month = 	 {18--24 Jul},
    publisher =    {PMLR},
    url = 	 {https://proceedings.mlr.press/v139/bodnar21a.html}
}

@article{hansen2019toward,
  title   = {Toward a Spectral Theory of Cellular Sheaves},
  author  = {Hansen, Jakob and Ghrist, Robert},
  journal = {Journal of Applied and Computational Topology},
  volume  = {3},
  pages   = {315--358},
  year    = {2019},
  doi     = {10.1007/s41468-019-00038-7}
}

@inproceedings{hansen2020sheaf,
    title={Sheaf Neural Networks},
    author={Jakob Hansen and Thomas Gebhart},
    booktitle={TDA {\&} Beyond},
    year={2020},
    url={https://openreview.net/forum?id=GgcgIJsT8HD}
}

@inproceedings{bodnar2022neural_sheaf_diffusion,
    author = {Bodnar, Cristian and Di Giovanni, Francesco and Chamberlain, Benjamin and Li\'{o}, Pietro and Bronstein, Michael},
    booktitle = {Advances in Neural Information Processing Systems},
    editor = {S. Koyejo and S. Mohamed and A. Agarwal and D. Belgrave and K. Cho and A. Oh},
    pages = {18527--18541},
    publisher = {Curran Associates, Inc.},
    title = {Neural Sheaf Diffusion: A Topological Perspective on Heterophily and Oversmoothing in GNNs},
    url = {https://proceedings.neurips.cc/paper_files/paper/2022/file/75c45fca2aa416ada062b26cc4fb7641-Paper-Conference.pdf},
    volume = {35},
    year = {2022}
}

@InProceedings{barbero2022sheaf_connection,
    title = 	 {Sheaf Neural Networks with Connection Laplacians},
    author =       {Barbero, Federico and Bodnar, Cristian and S\'aez de Oc\'ariz Borde, Haitz and Bronstein, Michael and Veli\v{c}kovi\'c, Petar and Li\`o, Pietro},
    booktitle = 	 {Proceedings of Topological, Algebraic, and Geometric Learning Workshops 2022},
    pages = 	 {28--36},
    year = 	 {2022},
    editor = 	 {Cloninger, Alexander and Doster, Timothy and Emerson, Tegan and Kaul, Manohar and Ktena, Ira and Kvinge, Henry and Miolane, Nina and Rieck, Bastian and Tymochko, Sarah and Wolf, Guy},
    volume = 	 {196},
    series = 	 {Proceedings of Machine Learning Research},
    month = 	 {25 Feb--22 Jul},
    publisher =    {PMLR},
    url = 	 {https://proceedings.mlr.press/v196/barbero22a.html}
}

@inproceedings{bodnar2021cellular,
    title={Weisfeiler and Lehman Go Cellular: {CW} Networks},
    author={Cristian Bodnar and Fabrizio Frasca and Nina Otter and Yu Guang Wang and Pietro Li{\`o} and Guido Montufar and Michael M. Bronstein},
    booktitle={Advances in Neural Information Processing Systems},
    editor={A. Beygelzimer and Y. Dauphin and P. Liang and J. Wortman Vaughan},
    year={2021},
    url={https://openreview.net/forum?id=uVPZCMVtsSG}
}

@inproceedings{hajij2020cell_complex,
    title={Cell Complex Neural Networks},
    author={Mustafa Hajij and Kyle Istvan and Ghada Zamzmi},
    booktitle={TDA {\&} Beyond},
    year={2020},
    url={https://openreview.net/forum?id=6Tq18ySFpGU}
}

@misc{roddenberry2019hodgenet,
    author={Roddenberry, T. Mitchell and Segarra, Santiago},
    booktitle={2019 53rd Asilomar Conference on Signals, Systems, and Computers}, 
    title={HodgeNet: Graph Neural Networks for Edge Data}, 
    year={2019},
    volume={},
    number={},
    pages={220-224},
    doi={10.1109/IEEECONF44664.2019.9049000}
}

@InProceedings{roddenberry2021principled,
  title = 	 {Principled Simplicial Neural Networks for Trajectory Prediction},
  author =       {Roddenberry, T. Mitchell and Glaze, Nicholas and Segarra, Santiago},
  booktitle = 	 {Proceedings of the 38th International Conference on Machine Learning},
  pages = 	 {9020--9029},
  year = 	 {2021},
  editor = 	 {Meila, Marina and Zhang, Tong},
  volume = 	 {139},
  series = 	 {Proceedings of Machine Learning Research},
  month = 	 {18--24 Jul},
  publisher =    {PMLR},
  url = 	 {https://proceedings.mlr.press/v139/roddenberry21a.html}
}

@article{schaub2021higher_order_signal_processing,
    title = {Signal processing on higher-order networks: Livin’ on the edge... and beyond},
    journal = {Signal Processing},
    volume = {187},
    pages = {108149},
    year = {2021},
    issn = {0165-1684},
    doi = {https://doi.org/10.1016/j.sigpro.2021.108149},
    url = {https://www.sciencedirect.com/science/article/pii/S0165168421001870},
    author = {Michael T. Schaub and Yu Zhu and Jean-Baptiste Seby and T. Mitchell Roddenberry and Santiago Segarra}
}

@book{goedbloed2004principles,
  title     = {Principles of Magnetohydrodynamics: With Applications to Laboratory and Astrophysical Plasmas},
  author    = {Goedbloed, J. P. and Poedts, Stefaan},
  year      = {2004},
  publisher = {Cambridge University Press},
  address   = {Cambridge}
}

@book{freidberg2014ideal,
  title     = {Ideal MHD},
  author    = {Freidberg, Jeffrey P.},
  year      = {2014},
  publisher = {Cambridge University Press},
  address   = {Cambridge}
}

@article{brackbill1980effect,
    title = {The Effect of Nonzero {$\nabla \cdot B$} on the Numerical Solution of the Magnetohydrodynamic Equations},
    journal = {Journal of Computational Physics},
    volume = {35},
    number = {3},
    pages = {426-430},
    year = {1980},
    issn = {0021-9991},
    doi = {https://doi.org/10.1016/0021-9991(80)90079-0},
    url = {https://www.sciencedirect.com/science/article/pii/0021999180900790},
    author = {J.U Brackbill and D.C Barnes}
}

@article{powell1999solution,
  title   = {A Solution-Adaptive Upwind Scheme for Ideal Magnetohydrodynamics},
  author  = {Powell, Kenneth G. and Roe, Philip L. and Linde, Timur J. and Gombosi, Tamas I. and De Zeeuw, Darren L.},
  journal = {Journal of Computational Physics},
  volume  = {154},
  number  = {2},
  pages   = {284--309},
  year    = {1999},
  doi     = {10.1006/jcph.1999.6299}
}

@article{miyoshi2005multi,
  title   = {A Multi-State {HLL} Approximate Riemann Solver for Ideal Magnetohydrodynamics},
  author  = {Miyoshi, Takahiro and Kusano, Kanya},
  journal = {Journal of Computational Physics},
  volume  = {208},
  number  = {1},
  pages   = {315--344},
  year    = {2005},
  doi     = {10.1016/j.jcp.2005.02.017}
}

@article{gardiner2008unsplit,
  title   = {An Unsplit {Godunov} Method for Ideal {MHD} via Constrained Transport in Three Dimensions},
  author  = {Gardiner, Thomas A. and Stone, James M.},
  journal = {Journal of Computational Physics},
  volume  = {227},
  number  = {8},
  pages   = {4123--4141},
  year    = {2008},
  doi     = {10.1016/j.jcp.2007.12.017}
}

@article{stone2008athena,
  title   = {Athena: A New Code for Astrophysical {MHD}},
  author  = {Stone, James M. and Gardiner, Thomas A. and Teuben, Peter and Hawley, John F. and Simon, Jacob B.},
  journal = {The Astrophysical Journal Supplement Series},
  volume  = {178},
  number  = {1},
  pages   = {137--177},
  year    = {2008},
  doi     = {10.1086/588755}
}

@inproceedings{maggs2024simplicial,
    title={Simplicial Representation Learning with Neural \$k\$-Forms},
    author={Kelly Maggs and Celia Hacker and Bastian Rieck},
    booktitle={The Twelfth International Conference on Learning Representations},
    year={2024},
    url={https://openreview.net/forum?id=Djw0XhjHZb}
}

@InProceedings{kovac2024empcn,
  title = 	 {E(n) Equivariant Message Passing Cellular Networks},
  author =       {Kova\u{c}, Veljko and Bekkers, Erik and Li\'{o}, Pietro and Eijkelboom, Floor},
  booktitle = 	 {Proceedings of the Geometry-grounded Representation Learning and Generative Modeling Workshop (GRaM)},
  pages = 	 {173--186},
  year = 	 {2024},
  editor = 	 {Vadgama, Sharvaree and Bekkers, Erik and Pouplin, Alison and Kaba, Sekou-Oumar and Walters, Robin and Lawrence, Hannah and Emerson, Tegan and Kvinge, Henry and Tomczak, Jakub and Jegelka, Stephanie},
  volume = 	 {251},
  series = 	 {Proceedings of Machine Learning Research},
  month = 	 {29 Jul},
  publisher =    {PMLR},
  url = 	 {https://proceedings.mlr.press/v251/kovac-24a.html}
}

@misc{shaffer2026geonew,
      title={Structure-Preserving Learning Improves Geometry Generalization in Neural PDEs}, 
      author={Benjamin D. Shaffer and Shawn Koohy and Brooks Kinch and M. Ani Hsieh and Nathaniel Trask},
      year={2026},
      eprint={2602.02788},
      archivePrefix={arXiv},
      primaryClass={cs.LG},
      url={https://arxiv.org/abs/2602.02788}, 
}

@misc{kinch2025conditional,
      title={Structure-Preserving Digital Twins via Conditional Neural Whitney Forms}, 
      author={Brooks Kinch and Benjamin Shaffer and Elizabeth Armstrong and Michael Meehan and John Hewson and Nathaniel Trask},
      year={2025},
      eprint={2508.06981},
      archivePrefix={arXiv},
      primaryClass={cs.LG},
      url={https://arxiv.org/abs/2508.06981}, 
}

@article{lanthaler2022error,
    author = {Lanthaler, Samuel and Mishra, Siddhartha and Karniadakis, George E},
    title = {Error estimates for DeepONets: a deep learning framework in infinite dimensions},
    journal = {Transactions of Mathematics and Its Applications},
    volume = {6},
    number = {1},
    pages = {tnac001},
    year = {2022},
    month = {01},
    issn = {2398-4945},
    doi = {10.1093/imatrm/tnac001},
    url = {https://doi.org/10.1093/imatrm/tnac001},
    eprint = {https://academic.oup.com/imatrm/article-pdf/6/1/tnac001/42785544/tnac001.pdf},
}

@misc{adcock2025learning,
      title={The Sample Complexity of Learning Lipschitz Operators with respect to Gaussian Measures}, 
      author={Ben Adcock and Michael Griebel and Gregor Maier},
      year={2025},
      eprint={2410.23440},
      archivePrefix={arXiv},
      primaryClass={cs.LG},
      url={https://arxiv.org/abs/2410.23440}, 
}

@article{mccabe2023stability,
    title={Towards Stability of Autoregressive Neural Operators},
    author={Michael McCabe and Peter Harrington and Shashank Subramanian and Jed Brown},
    journal={Transactions on Machine Learning Research},
    issn={2835-8856},
    year={2023},
    url={https://openreview.net/forum?id=RFfUUtKYOG},
    note={}
}

@article{kim2025operatorlearning,
    title = {Neural operators learn the local physics of magnetohydrodynamics},
    journal = {Computers \& Fluids},
    volume = {297},
    pages = {106661},
    year = {2025},
    issn = {0045-7930},
    doi = {https://doi.org/10.1016/j.compfluid.2025.106661},
    url = {https://www.sciencedirect.com/science/article/pii/S0045793025001215},
    author = {Taeyoung Kim and Youngsoo Ha and Myungjoo Kang}
}

@article{makwana2020properties,
  title = {Properties of Magnetohydrodynamic Modes in Compressively Driven Plasma Turbulence},
  author = {Makwana, K. D. and Yan, Huirong},
  journal = {Phys. Rev. X},
  volume = {10},
  issue = {3},
  pages = {031021},
  numpages = {15},
  year = {2020},
  month = {Jul},
  publisher = {American Physical Society},
  doi = {10.1103/PhysRevX.10.031021},
  url = {https://link.aps.org/doi/10.1103/PhysRevX.10.031021}
}

@inproceedings{shikhman2026one,
    title={One Operator to Rule Them All? On Boundary-Indexed Operator Families in Neural {PDE} Solvers},
    author={Lennon Shikhman},
    booktitle={AI{\&}PDE: ICLR 2026 Workshop on AI and Partial Differential Equations},
    year={2026},
    url={https://openreview.net/forum?id=lDjWQ9UxRy}
}

\end{document}